\documentclass{article}

    \PassOptionsToPackage{numbers, compress}{natbib}


\usepackage[preprint]{neurips_2024}



\usepackage[utf8]{inputenc} 
\usepackage[T1]{fontenc}    
\usepackage{hyperref}       
\usepackage{url}            
\usepackage{booktabs}       
\usepackage{amsfonts}       
\usepackage{nicefrac}       
\usepackage{microtype}      
\usepackage{xcolor}         
\usepackage{graphicx} 
\usepackage{wrapfig}
\usepackage{booktabs}
\usepackage{amsmath} 
\usepackage{enumitem}
\usepackage{float}
\usepackage{array}
\usepackage{multirow} 
\usepackage{graphicx}
\usepackage{amsmath}
\usepackage{multirow}
\usepackage{caption}
\usepackage{amsthm}
\usepackage{lipsum}
\usepackage{subcaption}
\usepackage{arydshln}

\title{Compress Guidance in Conditional Diffusion Sampling}

\newtheorem{theorem}{Theorem}
\newtheorem{definition}{Definition}

%

\author{Anh-Dung Dinh \\
  School of Computer Science\\
  The University of Sydney\\
  \texttt{dinhanhdung1996@gmail.com}
  \And
  Daochang Liu \\
  School of Computer Science\\
  The University of Sydney\\
  \texttt{daochang.liu@sydney.edu.au}
  \And
  Chang Xu \thanks{Corresponding authors}\\
  School of Computer Science \\
  The University of Sydney\\
  \texttt{c.xu@sydney.edu.au}
}

\begin{document}

\maketitle


\begin{abstract}
  We found that enforcing guidance throughout the sampling process is often counterproductive due to the model-fitting issue, where samples are `tuned' to match the classifier’s parameters rather than generalizing the expected condition. This work identifies and quantifies the problem, demonstrating that reducing or excluding guidance at numerous timesteps can mitigate this issue. By distributing a small amount of guidance over a large number of sampling timesteps, we observe a significant improvement in image quality and diversity while also reducing the required guidance timesteps by nearly 40\%. This approach addresses a major challenge in applying guidance effectively to generative tasks. Consequently, our proposed method, termed Compress Guidance, allows for the exclusion of a substantial number of guidance timesteps while still surpassing baseline models in image quality. We validate our approach through benchmarks on label-conditional and text-to-image generative tasks across various datasets and models.
\end{abstract}

\section{Introduction}
\newcommand{\xb}{\mathbf{x}}
\newcommand{\mub}{\boldsymbol{\mu}}
\newcommand{\tmub}{\Tilde{\boldsymbol{\mu}}}
\newcommand{\balpha}{\Bar{\alpha}}
\newcommand{\expect}{\mathop{\mathbb{E}}}
\newcommand{\zb}{\mathbf{z}}

\newcommand{\xzpred}{\textit{$\xb_0$ prediction}}
\newcommand{\xzsamp}{\textit{$\xb_0$ sampling}}
\newcommand{\noisepred}{\textit{noise prediction}}

 Guidance in diffusion models is mainly divided into classifier-free guidance in \cite{ho2022classifier}, and classifier guidance in \cite{dhariwal2021diffusion}. Although both of these methods significantly improve the performance of the diffusion samples ~\cite{dhariwal2021diffusion,ho2022classifier, bansal2023universal, liu2023more, epstein2023diffusion}, they both suffer from large computation time. For classifier guidance, the act of gradients calculation backwards through a classifier is costly. On the other hand, forwarding through a diffusion model twice at every timestep is also expensive in classifier-free guidance. 

This work challenges the necessity of the current complex process based on several key observations. First, we find that the guidance loss is predominantly active during the early stages of the sampling process, when the image lacks a well-defined structure. As the model progresses and shifts its focus to refining image details, the guidance loss tends to approach zero. Additionally, when evaluating intermediate samples with an additional classifier not used for guidance, we observe that the loss from this external classifier does not decrease in the same way as it does for the guidance-specific classifier. This suggests that the generated samples are tailored to fit the features of the guiding classifier rather than producing generalized features applicable to different classifiers. We define this issue as \textit{model-fitting}, where the generated image pixels are optimized to satisfy the guiding classifier's criteria rather than generalizing to the intended conditions. The problem is validated by three pieces of evidence in section \ref{sec:modelfitting}. 


These observations prompt us to question whether guidance is necessary at every timestep and how reducing the frequency of guidance could enhance generative quality. In Section \ref{sec:analysis}, we further explore the properties of guidance in ensuring sample quality. Based on this analysis, we propose a simple yet effective method called Compress Guidance (CompG), which mitigates the issue by reducing the number of timesteps that invoke gradient calculation. This approach not only improves sample quality but also significantly accelerates the overall process as shown in Fig.\ref{fig:hook1}.

Overall, the contributions of our works are three-fold: \textbf{(1)} Explore and quantify the model-fitting problem in guidance and the redundant computation resulting from current guidance methods. \textbf{(2)} Propose a simple but effective method to contain the model-fitting problem and improve computational time. \textbf{(3)} Extensive analysis and experimental results for different datasets and generative tasks on both classifier and classifier-free guidance perspectives.
\begin{figure}
    \vskip -1cm
    \centering
    \includegraphics[width=\textwidth]{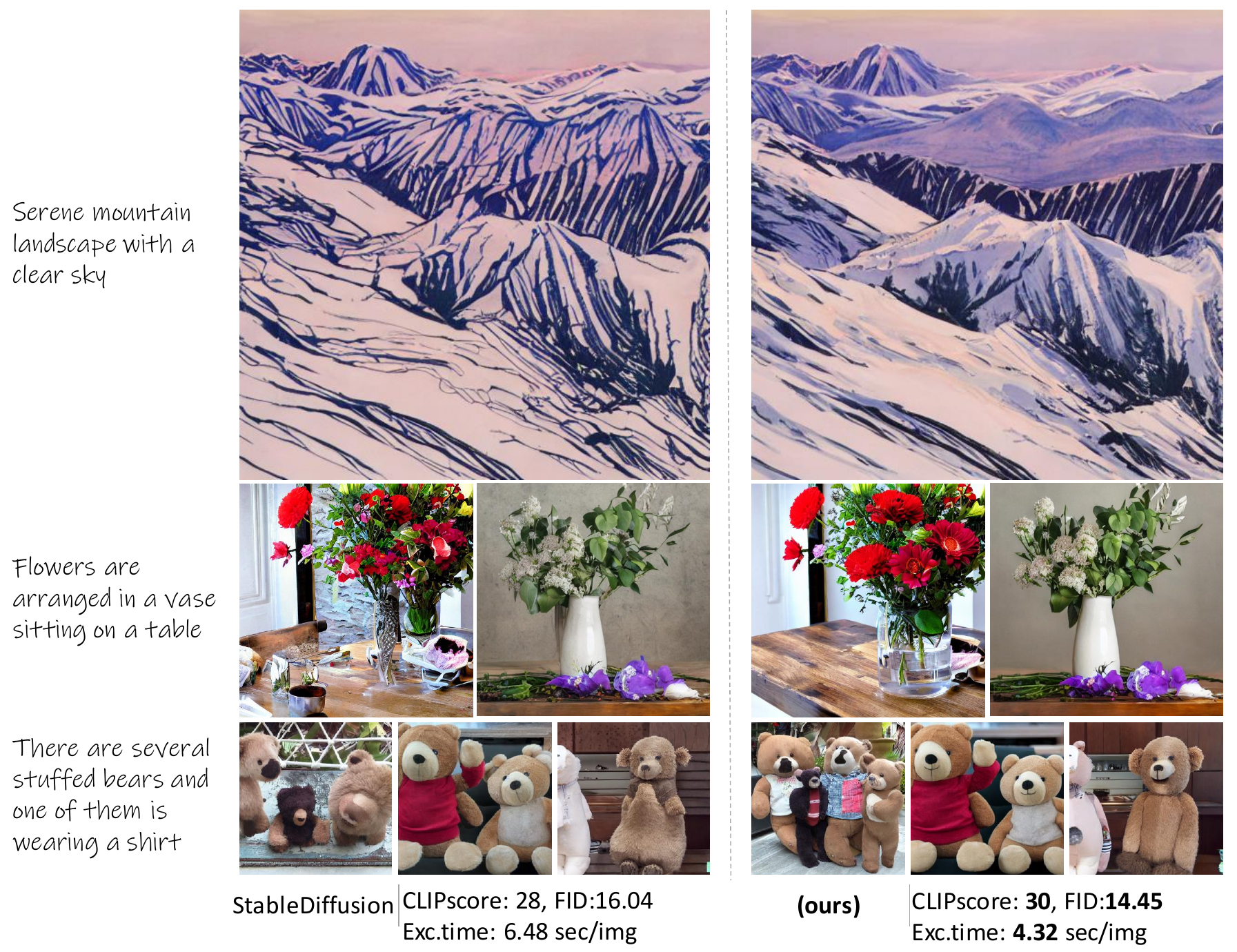}
    \caption{\small\textit{Stable Diffusion with classifier-free guidance. The left figure is the vanilla classifier-free guidance with application on all 50 timesteps. Our proposed Compress Guidance method is the right figure, where we only apply guidance on 10 over 50 steps. The output shows our methods' superiority over classifier-free guidance regarding image quality, quantitative performance and efficiency. The efficiency is counted based on the time to generate 30000 images with 1 GPU.}}
    \label{fig:hook1}
    \vskip -0.6cm
\end{figure}
\section{Background}
\vskip -0.2cm
\textbf{Diffusion Models} \cite{ho2020denoising} has the form of: $p_{\theta} := p(\mathbf{x}_T) \prod_{t=1}^{T} p_{\theta}(\xb_{t-1}|\xb_t)$ where $p_{\theta}(\mathbf{x}_{t-1}|\mathbf{x}_t):= \mathcal{N}(\xb_{t-1}; \mu_{\theta}(x_t, t), \Sigma_{\theta}(x_t, t))$ supporting the reverse process from $\xb_T$ to $\xb_0$. This process is denoising process where starting from the $\xb_T \sim \mathcal{N}(\xb_T; 0, \mathbf{I})$ to gradually move to $\xb_0 \sim q(\xb_0)$. This process is trained to be matched with the forward diffusion process  $q(\xb_{1:T}|\xb_0) := \prod_{t=1}^{T}q(\xb_t|\xb_{t-1})$
given $q(\xb_t|\xb_{t-1})$ as $q(\xb_t|\xb_{t-1}) := \mathcal{N}(\xb_t; \sqrt{1 - \beta_t} \xb_{t-1}, \beta \mathbf{I})$ or we can write the conditional distribution of $\xb_t$ given $\xb_0$ as below:
\begin{equation}\label{eq:qxtxo}
    q(\xb_{t}|\xb_0) := \mathcal{N}(\xb_t; \sqrt{\balpha_t} \xb_0, (1 - \balpha) \mathbf{I})
\end{equation}
$\beta_t$ is the fixed variance scheduled before the process starts, \cite{ho2020denoising} denotes $\alpha_t := 1 - \beta_t$ and $\balpha_t := \prod_{s=1}^{t} \alpha_s$ used in Eq.\ref{eq:qxtxo}. We have the $\xb_{t-1}$ conditioned on $\xb_0$ and $\xb_t$ as:
\begin{equation}\label{eq:qxtprev}
    q(\xb_{t-1}|\xb_t, \xb_0) = \mathcal{N}(\xb_{t-1}; \Tilde{\mub}_t(\xb_t, \xb_0), \Tilde{\beta}_t \mathbf{I})
\end{equation}
where $\tmub_t(\xb_t, \xb_0) := \frac{\sqrt{\balpha_{t-1}} \beta_t}{1 - \balpha_t} \xb_0 + \frac{\sqrt{\alpha_t}(1 - \balpha_{t-1}}{1 - \balpha_t} \xb_t$ and $\Tilde{B}_t := \frac{1 - \balpha_{t-1}}{1 - \balpha_t} \beta_t$. To train the diffusion model, the lower bound loss is utilized as below:
\begin{equation}
    \expect[-\log p_{\theta}(\xb_0)] \leq \expect_q[-\log p(\xb_T) - \Sigma_{t \geq 1} \log \frac{p_{\theta}(\xb_{t-1}|\xb_t)}{q(\xb_t|\xb_{t-1})}] \label{eq:vbound}  
\end{equation}
Rewrite Eq.~\ref{eq:vbound} as  $\expect_q[D_{KL}(q(\xb_T|\xb_0) || p(\xb_T)) + \sum_{t>1} D_{KL}(q(\xb_{t-1}|\xb_t, \xb_0) || p_{\theta}(\xb_{t-1}| \xb_t)) - \log p_{\theta}(\xb_0|\xb_1)]$
The training process actually optimize the $\sum_{t>1} D_{KL}(q(\xb_{t-1}|\xb_t, \xb_0) || p_{\theta}(\xb_{t-1}| \xb_t))$ where the diffusion model try to match the distribution of $\xb_{t-1}$ by using only $\xb_t$. There are several implementations for optimising the \ref{eq:vbound}. However, the $\theta$ as parameters of the noise predictor $\epsilon_{\theta}(\xb_t, t)$ is the most popular choice. After the $\theta$ are trained using Eq.~\ref{eq:vbound}, the sampling equation:
\begin{equation}\label{eq:sampling7}
    \xb_{t-1} = \frac{1}{\sqrt{\alpha_t}} (\xb_t - \frac{1-\alpha_t}{\sqrt{1- \balpha_t}}\epsilon_{\theta}(\xb_t, t)) + \sigma_t \zb
\end{equation}
\textbf{Guidance} in the Diffusion model offers conditional information and image quality enhancement. Given a classifier $p_{\phi}(y|\xb_t)$ that match with the labels distribution conditioned on images $\xb_t$, we have the sampling equation with guidance as:
\begin{equation}\label{eq:samplingg}
    \xb_{t-1} \sim \mathcal{N}(\mu_t  + s \sigma_t^2 \nabla_{\xb_t}\log p_{\phi}(y|\xb_t), \sigma_t)
\end{equation} with $s$ is the guidance scale. 
Beside the classifier guidance as Eq.\ref{eq:samplingg}, \cite{ho2022classifier} proposes another version named classifier-free guidance. This guidance method does not base the information on a classifier. Instead, the guidance depends on the conditional information from a conditional diffusion model. The sampling equation has the form:
\begin{equation}\label{eq:samplingcfg}
    \xb_{t-1} \sim \mathcal{N}(\Tilde{\mub}_{t}(\xb_t, \frac{\xb_t - \sqrt{1 - \balpha} \Tilde{\epsilon}_t}{\sqrt{\balpha_t}}), \sigma_t) 
\end{equation}
given $\Tilde{\epsilon}  = (1 + w)\epsilon_{\theta}(\xb_t, c) - w \epsilon_{\theta}(\xb_t)$ with $w$ is the guidance scale. 
\section{Model-fitting in Guidance}
 
 We begin by modelling the sampling equation as two distinct optimization objectives, illustrating that the sampling process functions as a form of ``training", where parameters $\xb_t$ are optimized over $T$ timesteps. We then analyze the ``training" of $\xb_t$ in light of these objectives, highlighting the model-fitting problem that arises in the current guidance-driven sampling process. To address this issue, we propose a simple method called Compress Guidance, which helps mitigate the observed model-fitting problem. From Eq.\ref{eq:sampling7}, we have:
\begin{equation}\label{eq:samplingobj}
    \xb_{t-1}  = \frac{(1- \alpha_t) \sqrt{\balpha_{t-1}}}{1 - \balpha_t} \frac{\xb_t  - \sqrt{1 - \balpha_t}\epsilon_{\theta}(\xb_t, t)}{\sqrt{\balpha_t}} + \frac{(1-\balpha_{t-1})\sqrt{\alpha_t}}{1-\balpha_t}\xb_t + \sigma_t z
\end{equation} \textbf{Distribution matching objective}: Assuming that $\epsilon_{\theta}(\xb_t, t)$ is learned perfectly to match random noise $\epsilon$ at timestep $t$, we have $\frac{\xb_t  - \sqrt{1 - \balpha_t}\epsilon_{\theta}(\xb_t, t)}{\sqrt{\balpha_t}} = \xb_0$ is the exact prediction of $\xb_0$ at timestep $t$ according to Eq.\ref{eq:qxtxo}. With $\Tilde{\xb}_0$ is the prediction of $\xb_0$ at timestep $t$, we can re-write the equation as bellow:
\begin{equation}\label{eq:samplexo2}
    \xb_{t-1} = \frac{(1-\alpha_t) \sqrt{\balpha_{t-1}}}{1-\balpha_t} \Tilde{\xb}_0 +  \frac{(1-\balpha_{t-1})\sqrt{\alpha_t}}{1-\balpha_t}\xb_t + \sigma_t z 
\end{equation}
This equation \ref{eq:samplexo2} can be derived from $q(\xb_{t-1}|\xb_t, \xb_0)$ in Eq. \ref{eq:qxtprev} with parameterized trick for Gaussian Distribution. Thus, the first aim of the sampling process is to match the distribution $q(\xb_{t-1}|\xb_t, \xb_0)$. Nevertheless, the Eq.\ref{eq:samplexo2} is based on the assumption that $\Tilde{\xb}_0 \sim \xb_0$, which often does not hold when $t \rightarrow T$. Given $\Tilde{\xb}_0 = \frac{\xb_t  - \sqrt{1 - \balpha_t}\epsilon_{\theta}(\xb_t, t)}{\sqrt{\balpha_t}}$, this formulation is rooted from $\Tilde{\xb}_0 \sim \mathcal{N}(\frac{1}{\sqrt{\balpha}}\xb_t; \frac{\balpha - 1}{\balpha} \mathbf{I})$ with assumption that $\epsilon_{\theta}(\xb_t, t) \sim \epsilon$. However, $\epsilon_{\theta}(\xb_t, t)$ is trained to minimize $D_{KL}[q(\xb_{t-1}|\xb_t, \xb_0) || p_{\theta}(\xb_{t-1}| \xb_t)  ]$ as in \cite{ho2020denoising} which actually causes a significantly distorted information if $\epsilon_{\theta}(\xb_t, t)$ is utilized to sample $\Tilde{\xb}_0$ from $\xb_t$ if $t \rightarrow T$. A smaller $t$ would result in a better prediction of $\xb_0$ and with $t = 0$, we have $\balpha = 1$ resulting in $\Tilde{\xb}_0 = \xb_t$. 

\begin{theorem}\label{theo:obj}
    Assume that $\epsilon_{\theta}$ is trained to converge and the real data density function $q(\xb_0)$ satisfies a form of Gaussian distribution. The process of recurrent sampling $\xb_{t-1} \sim q(\xb_{t-1}|\xb_t, \Tilde{\xb}_0)$ from $T$ to $0$ is equivalent to minimization process of $D_{KL}[q(\xb_0) || p_{\theta}(\Tilde{\xb_0}|\xb_t) ]$ .wrt. $\xb_t$.
\end{theorem}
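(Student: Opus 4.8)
The plan is to read a single reverse step, Eq.~\ref{eq:samplexo2} with the Gaussian reparameterization $\xb_{t-1}\sim q(\xb_{t-1}|\xb_t,\Tilde{\xb}_0)$, as one (noisy) gradient step in the variable $\xb_t$ on the functional $L_t(\xb_t):=D_{KL}[\,q(\xb_0)\,||\,p_\theta(\Tilde{\xb}_0|\xb_t)\,]$, and then to argue that iterating it for $t=T,T-1,\dots,1$ is exactly an annealed minimization of this divergence. First I would make both densities explicit. For $p_\theta(\Tilde{\xb}_0|\xb_t)$, the convergence of $\epsilon_\theta$ together with the assumption $\epsilon_\theta(\xb_t,t)\sim\epsilon$ (used just above the statement) gives $p_\theta(\Tilde{\xb}_0|\xb_t)=\mathcal{N}\big(\tfrac{1}{\sqrt{\balpha_t}}\xb_t,\ \tfrac{1-\balpha_t}{\balpha_t}\mathbf{I}\big)$, whose mean is affine in $\xb_t$ and whose covariance is $\xb_t$-independent. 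For $q(\xb_0)$, the hypothesis that it is Gaussian, say $\mathcal{N}(\mub_q,\Sigma_q)$, lets me invoke the closed form for the KL divergence between two Gaussians; the only term depending on $\xb_t$ is the Mahalanobis term $\tfrac{\balpha_t}{2(1-\balpha_t)}\big\|\tfrac{1}{\sqrt{\balpha_t}}\xb_t-\mub_q\big\|^2$, so $L_t$ is, up to $\xb_t$-independent constants, a strictly convex quadratic in $\xb_t$.

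The next step is to differentiate and match. One obtains $\nabla_{\xb_t}L_t=\tfrac{1}{1-\balpha_t}\big(\xb_t-\sqrt{\balpha_t}\,\mub_q\big)$, and since a converged $\epsilon_\theta$ makes $\Tilde{\xb}_0=\expect_q[\xb_0|\xb_t]$ (Tweedie's identity), which for Gaussian $q$ is affine in $\xb_t$ and collapses to $\mub_q$ in the high-noise regime, the descent direction is proportional to $\xb_t-\sqrt{\balpha_t}\,\Tilde{\xb}_0$. I would then substitute $\Tilde{\xb}_0$ back into Eq.~\ref{eq:samplexo2}, collect the coefficients of $\xb_t$ and of $\Tilde{\xb}_0$, and verify that the deterministic part of the update can be written as $\xb_{t-1}=\xb_t-\eta_t\,\nabla_{\xb_t}L_t$ for a schedule-dependent step size $\eta_t=O(\beta_t)$, while the residual $\sigma_t\zb$ plays the role of isotropic gradient noise whose magnitude $\sigma_t\to0$ as $t\to0$ --- i.e.\ the recursion is a simulated-annealing / Langevin descent on $L_t$. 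Since at the minimizer $\tfrac{1}{\sqrt{\balpha_t}}\xb_t$ equals the target $\xb_0$, this is precisely the distribution-matching objective: running the reverse chain from $T$ to $0$ minimizes $D_{KL}[\,q(\xb_0)\,||\,p_\theta(\Tilde{\xb}_0|\xb_t)\,]$ with respect to $\xb_t$.

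The hard part, and the place that needs the most care, is that $L_t$ is a \emph{moving} target: its curvature $\tfrac{\balpha_t}{1-\balpha_t}$ and the matching step size $\eta_t$ both vary with $t$, and the bookkeeping identity between the update in Eq.~\ref{eq:samplexo2} and a pure gradient step closes only up to factors such as $\sqrt{\alpha_t}$, which tend to $1$ for fine noise schedules. I would resolve this by showing that the minimizers of the whole family $\{L_t\}$ are consistent --- each $L_t$ is minimized on the ray $\xb_t=\sqrt{\balpha_t}\,\expect_q[\xb_0|\xb_t]$, i.e.\ on the same ``clean'' data manifold --- so the annealed sequence is a single coherent descent process rather than a chain of unrelated optimizations, and by absorbing the $O(\beta_t)$ discrepancy into $\eta_t$ (equivalently, by reading the step as a preconditioned gradient step). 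A secondary subtlety is the substitution $\Tilde{\xb}_0\leftrightarrow\expect_q[\xb_0|\xb_t]$: it uses the convergence of $\epsilon_\theta$ \emph{and} the Gaussianity of $q(\xb_0)$ (so that the posterior mean is affine in $\xb_t$), which is exactly why both appear in the hypotheses.
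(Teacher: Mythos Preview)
Your route is genuinely different from the paper's. The paper does \emph{not} try to identify Eq.~\ref{eq:samplexo2} as a gradient step on $L_t$. Instead it argues monotonicity: for any $t_1<t_2$ it writes $\Tilde{\xb}_0^{(t)}=\xb_0+\tfrac{\sqrt{1-\balpha_t}}{\sqrt{\balpha_t}}(\epsilon-\epsilon_\theta(\xb_t,t))$, uses the convergence assumption to set $\|\epsilon-\epsilon_\theta\|\approx\Delta$ independently of $t$, and concludes $\|\Tilde{\xb}_0^{(t_1)}-\xb_0\|<\|\Tilde{\xb}_0^{(t_2)}-\xb_0\|$ purely from $\tfrac{1-\balpha_t}{\balpha_t}$ being increasing in $t$. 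It then passes from this pointwise error to the KL via the Gaussian hypothesis and a crude bound $D_{KL}(p\|q)\le\tfrac{b-a}{ab}\|p-q\|$. So the paper's argument is ``the schedule forces the prediction error to shrink, and shrinking error shrinks an upper bound on the KL''; yours is ``each reverse step is a stochastic gradient step on the KL itself.'' Your version is more structural and connects directly to the gradient picture in Eq.~\ref{eq:grad1}, while the paper's is more elementary and needs no coefficient-matching.

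That said, your matching step has a real gap. With $q(\xb_0)=\mathcal{N}(\mub_q,\Sigma_q)$ you correctly get $\nabla_{\xb_t}L_t=\tfrac{1}{1-\balpha_t}(\xb_t-\sqrt{\balpha_t}\,\mub_q)$, i.e.\ the gradient pulls $\xb_t$ toward the \emph{global} mean $\sqrt{\balpha_t}\,\mub_q$. But the actual update Eq.~\ref{eq:samplexo2} pulls toward $\sqrt{\balpha_t}\,\Tilde{\xb}_0$ with $\Tilde{\xb}_0=\expect_q[\xb_0|\xb_t]$, which for Gaussian $q$ is an $\xb_t$-dependent affine combination of $\mub_q$ and $\xb_t/\sqrt{\balpha_t}$, not $\mub_q$ itself except in the $\balpha_t\to 0$ limit you mention. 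Consequently your identification $\xb_{t-1}=\xb_t-\eta_t\nabla L_t+\sigma_t\zb$ cannot be made exact by any scalar $\eta_t$: matching the $\xb_t$-coefficient and the target-coefficient in Eq.~\ref{eq:samplexo2} forces two incompatible equations unless $\Tilde{\xb}_0=\mub_q$. You flag this as the ``hard part'' but absorbing it into a preconditioner or an $O(\beta_t)$ residual does not close it, because the discrepancy is $O(1)$ in the low-noise regime where $\Tilde{\xb}_0\approx\xb_t/\sqrt{\balpha_t}\neq\mub_q$. The paper avoids this mismatch entirely by never writing $\nabla L_t$; if you want to keep your gradient framing, you would need to replace $L_t$ by a functional whose minimizer tracks $\sqrt{\balpha_t}\,\expect_q[\xb_0|\xb_t]$ rather than $\sqrt{\balpha_t}\,\mub_q$.
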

\begin{proof} Given real data $\xb_0$, two latent samples are sampled at two timestep $t_1 < t_2$. We have, $\xb_{t_1} = \sqrt{\balpha_{t_1}} \xb_0 + \sqrt{1 - \balpha_{t_1}} \epsilon $ and $\xb_{t_2} = \sqrt{\balpha_{t_2}} \xb_0 + \sqrt{1 - \balpha_{t_2}} \epsilon$. From $\xb_{t_1}$ and $\xb_{t_2}$, real data prediction has the form of $\Tilde{\xb}^{(t_1)}_0 = \frac{\xb_{t_1}  - \sqrt{1 - \balpha_{t_1}}\epsilon_{\theta}(\xb_{t_1}, {t_1})}{\sqrt{\balpha_{t_1}}}$ and $\Tilde{\xb}^{(t_2)}_0 = \frac{\xb_{t_2}  - \sqrt{1 - \balpha_{t_2}}\epsilon_{\theta}(\xb_{t_2}, {t_2})}{\sqrt{\balpha_{t_2}}}$ correspondingly. Replace $\xb_{t_1}$ and $\xb_{t_2}$ with $\xb_0$ and $\epsilon$, we have $\Tilde{\xb}^{(t_1)}_0 = \xb_0 + \frac{\sqrt{1 - \balpha_{t_1}}(\epsilon -  \epsilon_{\theta}(\xb_{t_1}, t_1))}{\sqrt{\balpha_{t_1}}}$ and $\Tilde{\xb}^{(t_2)}_0 = \xb_0 + \frac{\sqrt{1 - \balpha_{t_2}}(\epsilon -\epsilon_{\theta}(\xb_{t_2}, t_2))}{\sqrt{\balpha_{t_2}}}$. Thus $||\Tilde{\xb_0}^{(t_1)} - \xb_0|| = \frac{1 - \balpha_{t_1} ||\epsilon - \epsilon_{\theta}(\xb_{t_1}, t_1)||}{\balpha_{t_1}}$ and $||\Tilde{\xb_0}^{(t_2)} - \xb_0|| = \frac{1 - \balpha_{t_2} ||\epsilon - \epsilon_{\theta}(\xb_{t_2}, t_2)||}{\balpha_{t_2}}$. Since $\epsilon_{\theta}(\xb_{t_1}, t_1) \sim \epsilon_{\theta}(\xb_{t_2}, t_2)  \sim \epsilon$, $||\epsilon - \epsilon_{\theta}(\xb_{t_1}, t_1)|| \approx ||\epsilon - \epsilon_{\theta}(\xb_{t_2}, t_2)|| \approx \Delta$. This results in  $||\Tilde{\xb_0}^{(t_1)} - \xb_0|| = \frac{1 - \balpha_{t_1} }{\balpha_{t_1}} \Delta$ and $||\Tilde{\xb_0}^{(t_2)} - \xb_0|| = \frac{1 - \balpha_{t_2} }{\balpha_{t_2}} \Delta$.  $||\Tilde{\xb_0}^{(t_1)} - \xb_0|| < ||\Tilde{\xb_0}^{(t_1)} - \xb_0||$ since $\frac{1 - \balpha_{t_2} }{\balpha_{t_2}} > \frac{1 - \balpha_{t_1} }{\balpha_{t_1}} \ge 0, \, \forall t_2 > t_1$. Consequently, the sampling of $\xb_{t-1} \sim q(\xb_{t-1}|\xb_t, \Tilde{\xb}_0)$ from timesteps $T$ to $0$ would mean the minimization of $||\Tilde{\xb_0}^{(t)} - \xb_0||$. Since $q(\xb_0)$ is a normal distribution, the final objective can be written as  $\min_{\xb_t} D_{KL}[q(\xb_0)||p_{\theta}(\Tilde{\xb_0}|\xb_t) ]$. (Full proof can be found in the appendix).
\end{proof}
If we consider $\xb_t$ of the Eq.\ref{eq:samplexo2} as the set of optimization parameters, the sampling process will have the objective function:
\begin{equation}\label{eq:obj1}
   \min_{\xb_t} D_{KL}[q(\xb_0)||p_{\theta}(\Tilde{\xb_0}|\xb_t) ]
\end{equation}
We re-write the Eq.\ref{eq:samplexo2} as:
\begin{equation}\label{eq:grad1}
    \xb_{t-1} = \xb_t - \underbrace{(\frac{\sqrt{\alpha_t}  - 1}{\sqrt{\alpha_t}} \xb_t + \frac{1 - \alpha_t}{\sqrt{1 - \balpha_t} \sqrt{\alpha_t}} \epsilon_{\theta}(\xb_t, t) - \sigma_t \zb)}_{\mathrm{ \gamma_1 \nabla D_{KL}[q(\xb_0)||p_{\theta}(\Tilde{\xb_0}|\xb_t)] }}
\end{equation}
Eq.\ref{eq:grad1} turns the sampling process into a stochastic gradient descent process where the $\xb_t$ is the parameter of the model at the timestep $t$, the updated direction into $\xb_t$ aims to satisfy the objective function Eq.\ref{eq:obj1}. 

\textbf{Classification objective}: From Eq.\ref{eq:samplingg}, we have the term $s\sigma_t^2 \nabla_{\xb_t}\log p_{\phi}(y|\xb_t)$ is added to the sampling equation for guidance. This term can be written in full form as $s\sigma_t^2 \nabla_{\xb_t} (q(y)\log q(y) - q(y) \log p_{\phi}(y|\xb_t))$ which is equivalent  to $- s\sigma_t^2 \nabla D_{KL}[ q(y) || p_{\phi}(\hat{y}|\xb_t)]$. Combine Eq.\ref{eq:grad1} with guidance information in Eq.\ref{eq:samplingg}, we have:
\begin{equation}\label{eq:grad2}
    \xb_{t-1} = \xb_t - \underbrace{(\frac{\sqrt{\alpha_t}  - 1}{\sqrt{\alpha_t}} \xb_t + \frac{1 - \alpha_t}{\sqrt{1 - \balpha_t} \sqrt{\alpha_t}} \epsilon_{\theta}(\xb_t, t) - \sigma_t \zb)}_{\mathrm{ \gamma_1 \nabla D_{KL}[q(\xb_0)||p_{\theta}(\Tilde{\xb_0}|\xb_t) ] }} - ( \underbrace{ -s \sigma_t^2 \nabla_{\xb_t}\log p_{\phi}(y|\xb_t))}_{\gamma_2 \nabla D_{KL}[q(y)||p_{\phi}(\hat{y}|\xb_t)]}
\end{equation} As a result, the process of updating $\xb_t$ to $\xb_{t-1}$ is a ``training" step to optimize to objective functions $D_{KL}[q(\xb_0)||p_{\theta}(\Tilde{\xb_0}|\xb_t)  ]$ and $D_{KL}[q(y)||p_{\phi}(\hat{y}|\xb_t) ]$ with two gradients respecting to $\xb_t$ as Eq.\ref{eq:grad2}. Since this is similar to the training process, it is expected to face some problems in training deep neural networks. In this work, the problem of model fitting is detected by observing the losses given by the classification objective during the sampling process. 

\subsection{Model-fitting}\label{sec:modelfitting}
\begin{table}
	\begin{minipage}{0.3\linewidth}	
		\centering
		\scalebox{0.7}{
    \begin{tabular}{lr}
					\toprule
					   Evaluation Model               &  Accuracy             
					\\
					\midrule
                        \textit{On-sampling classifier   }                & 90.8\%               \\
                        \textit{Off-sampling classifier}              & 62.5\%                    \\
                       \textit{Off-sampling Resnet152 }           & 34.2\%                       \\
					\bottomrule
				\end{tabular}}
    \caption{\small \textit{ A significant gap exists between the on-sampling and the off-sampling classifier in terms of accuracy.}}
		\label{table:student}
	\end{minipage}\hfill
	\begin{minipage}{0.7\linewidth}
		\centering
    \begin{subfigure}[t]{0.5\textwidth}
        \centering
        \includegraphics[width=\textwidth]{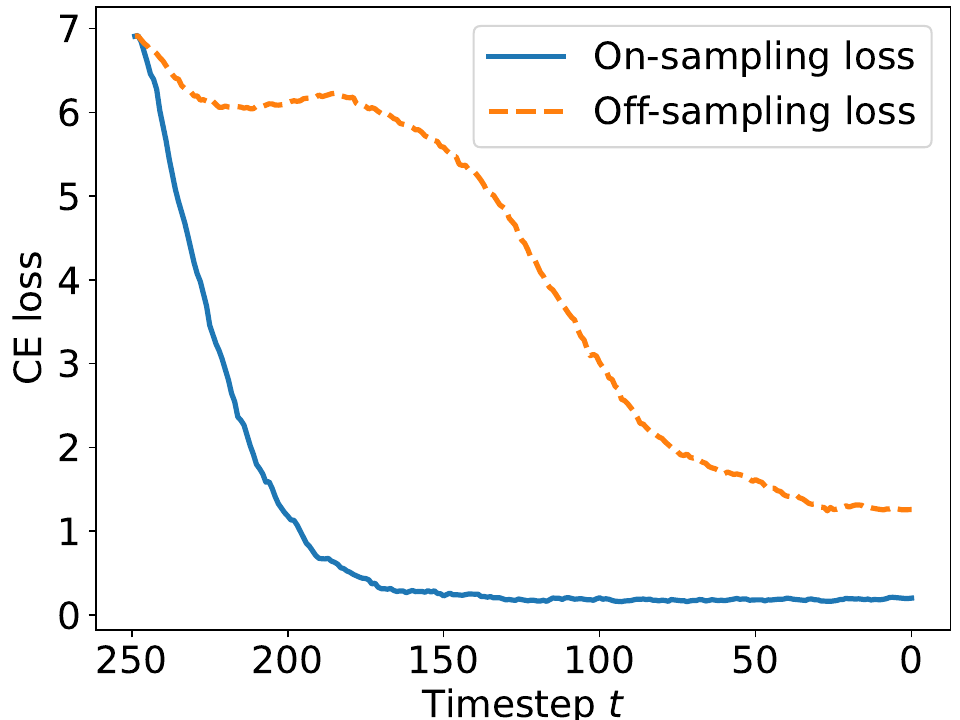}
    \end{subfigure}~
    \begin{subfigure}[t]{0.5\textwidth}
        \centering
        \includegraphics[width=\textwidth]{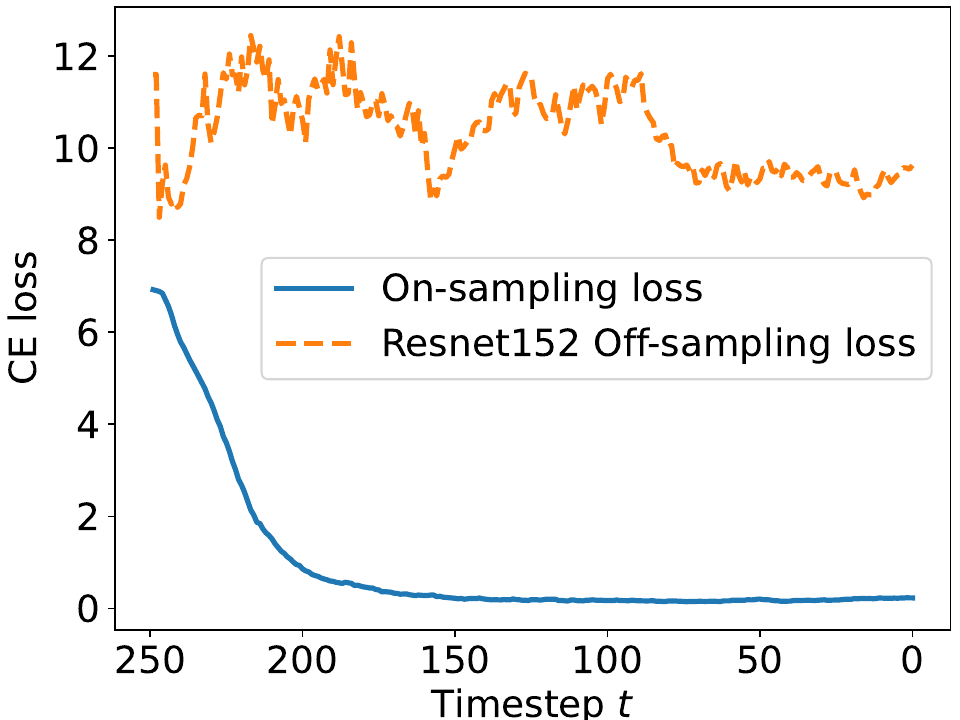}
    \end{subfigure}  
    \captionof{figure}{\small \textit{ (left) OADM-C, (right) Resnet152 off-sampling loss. The On-sampling loss converges very early while leaving the off-sampling loss converges at the end of the process after the conclusion of the denoising process.}}
		\label{fig:overfitting}
	\end{minipage}
 \vskip -0.5cm
\end{table}
\begin{figure}
    \centering
    \includegraphics[width=\textwidth]{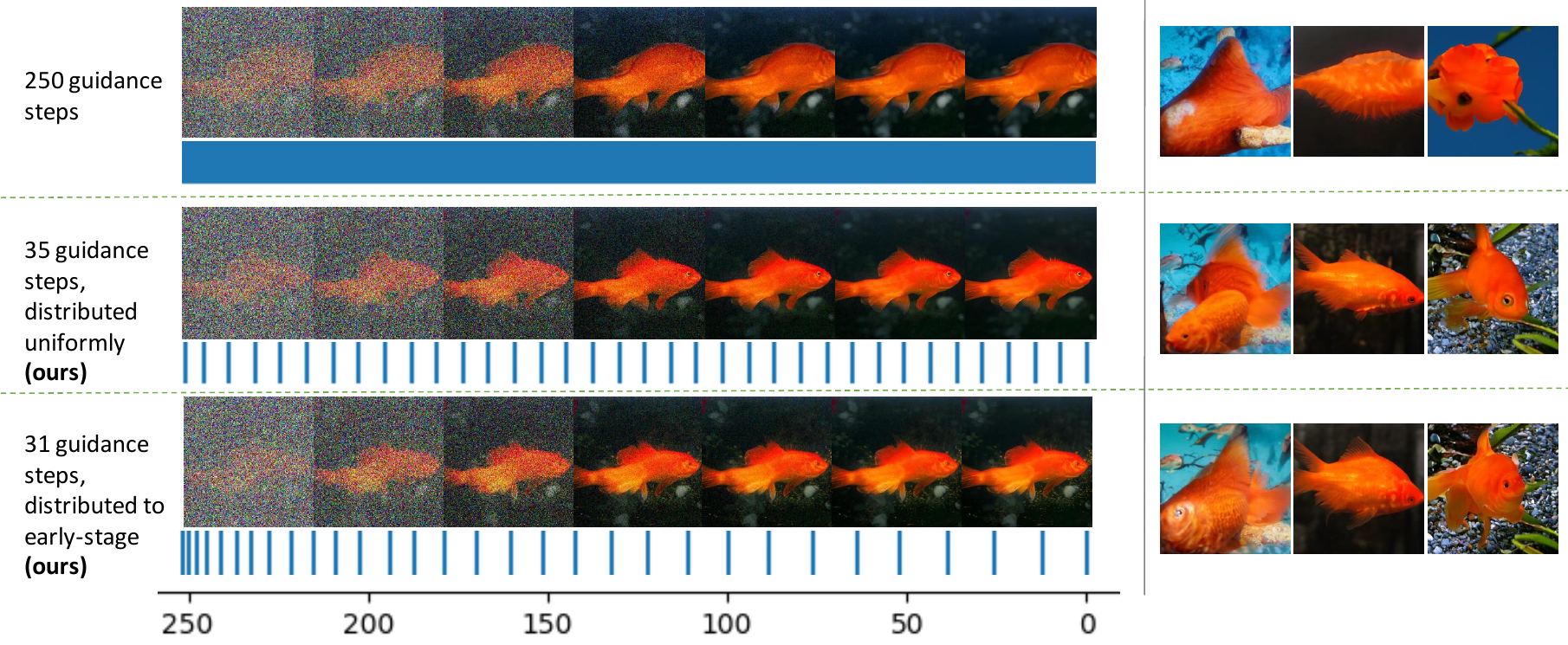}
    \caption{\small\textit{ImageNet256x256 samled by ADM-G in \cite{dhariwal2021diffusion}. The top row is the vanilla guidance, where all the timesteps got the guidance information. The second and third rows are our proposed method, which only applies 35 time steps. The second row distributes the timesteps uniformly, while the third row distributes the timesteps toward the early stage of the sampling process. The Compress Guidance performs significantly better than the original guidance method. One blue stick means one guidance step.}}
    \label{fig:hook2}
    \vskip -0.7cm
\end{figure}
Based on the optimization problem from the sampling process in the previous section, we first define \textit{on-sampling loss} and \textit{off-sampling loss} for observation.
\begin{definition}  \textbf{On-sampling loss/accuracy} refers to the loss or accuracy evaluated on the generated samples 
$\xb_t$ at timestep $t$ during the diffusion sampling process, which consists of $T$ timesteps. This loss is defined as $-\log p_{\phi}(\hat{y}|\xb_t)$  by the classifier parameters $\phi$ that provides guidance throughout the sampling process.
\end{definition}

\begin{definition}  \textbf{Off-sampling loss/accuracy} refers to the loss or accuracy evaluated on the generated samples 
$\xb_t$ at timestep $t$ during the diffusion sampling process, which consists of $T$ timesteps. This loss is defined as $-\log p_{\phi'}(\hat{y}|\xb_t)$ by the classifier parameters $\phi'$ that \textbf{does not} provides guidance throughout the sampling process.
\end{definition}  
we visualize the \textit{on-sampling} loss obtained from the noise-aware ADM classifier in \cite{dhariwal2021diffusion} as in Figure \ref{fig:overfitting}. We found out that the classification information is mainly active during the early stage of the process, as it converges very early in the first 120 timesteps. However, a different trend is observed for the \textit{off-sampling} loss. We set up an off-sampling classifier with the same architecture and performance as the on-sampling classifier used for guidance or in \textit{on-sampling} loss. The only difference between the two models is the parameters. The details on obtaining this off-sampling classifier are in Appendix \ref{app:exp:setup}. We name this off-sampling classifier as OADM-C. To avoid bias, we also use an off-the-shelf model ResNet152 \cite{he2015deep} to be another off-sampling classifier.
\begin{definition}
    \textbf{Model-fitting} occurs when sampled images $\xb_t$ at timestep $t$ is updated to maximize $p_{\phi}(y|\xb_t)$ or to satisfy the parameters of the $\phi$ only instead of the real distribution $q(y|\xb_t)$.
\end{definition}
In practice, a pretrained $p_{\phi}(y|\xb_t)$ is only able to capture part of the $q(y|\xb_t)$. Fitting solely with $p_{\phi}(y|\xb_t)$ limits the sample's generalisation ability, leading to incorrect features or overemphasising certain details due to misclassification or overfocusing of the guidance classifier. Three pieces of evidence support that the vanilla guidance suffers from \textbf{model-fitting} problem.

\textbf{Evidence 1:} From the figures in Table \ref{fig:overfitting}, we see that while the on-sampling loss converges around the 120${^{th}}$ timestep, the off-sampling loss remains high until the diffusion model converges later. This indicates that samples $\xb_t$ at timestep $t$ satisfy only the on-sampling classifier but not the off-sampling classifier, despite their identical performance and architecture. Although the off-sampling loss decreases by the end, a significant gap between the off-sampling and on-sampling losses persists. This supports our hypothesis that the guidance sampling process produces features that fit only the guidance classifier, not the conditional information.

\textbf{Evidence 2:} Table \ref{fig:overfitting} illustrates the model-fitting problem through accuracy metrics. With vanilla guidance, the accuracy is about 90.80\% for the on-sampling classifier. However, the same samples evaluated by the off-sampling classifier or Resnet152 achieve only around 62.5\% and 34.2\% accuracy, respectively. This indicates that many features generated by the model are specific to the guidance classifier and do not generalize to other models.

\textbf{Evidence 3:} Figure \ref{fig:hook2} (first row) shows samples from vanilla guidance, where every sampling step receives guidance information. Applying guidance at all timesteps forces the model to fit the on-sampling classifier's perception. Often, this makes the model colour-sensitive, focusing on generating the "orange" feature for Goldfish and ignoring other details. 

From the three pieces of evidence we can observe, we can conclude that the vanilla guidance scheme has suffered from the model-fitting problem. 

\textbf{Analogy to overfitting:}\label{subsec:analog} In neural network training, we have a dataset $\xb$ and a classifier $f_{\theta}(\xb)$ to approximate the posterior distribution $p(y|\xb)$. Let $\xb_{\text{train}}$ be the training data and $\xb_{\text{test}}$ the testing data. Overfitting occurs when $f_{\theta}$ is tailored to fit $\xb_{\text{train}}$ but fails to generalize to the entire dataset $\xb$. This is observed by the gap between training loss/accuracy and testing loss/accuracy on $\xb_{\text{train}}$ and $\xb_{\text{test}}$.
\begin{wraptable}{l}{0.4\textwidth}
\vskip -0.3cm
\centering
\small
\caption{\small \textit{ Overfitting vs. Model-Fitting}}
\scalebox{0.85}{
\begin{tabular}{lcc}
\hline
\textbf{Aspect} & \textbf{Overfitting} & \textbf{Model-fitting} \\
\hline
\textbf{Train Data} & $\xb_{\text{train}}$ & $f_{\phi_g}$ \\
\hline
\textbf{Test Data} & $\xb_{\text{test}}$ & $f_{\phi_o}$ \\
\hline
\textbf{Parameters} & $f_{\phi}$ & $\xb$ \\
\hline
\end{tabular}}
\vskip -0.3cm
\end{wraptable} In the diffusion model's sampling process, the classifier $f_{\theta}$ is pretrained or fixed. The aim is to adjust the samples $\xb$ to match the trained posterior $p_{\theta}(y|\xb)$. This process also uses Stochastic Gradient Descent with different roles: $f_{\phi_g}$ acts as the fixed data, and $\xb$ are the trainable parameters. The model-fitting problem arises when $\xb$ is adjusted to fit only the specific $f_{\theta}$ instead of generalizing well. Here, $f_{\phi_g}$ is the on-sampling "data", and we use an off-sampling "data" $f_{\phi_o}$ to observe the model-fitting where the gap between them is large, analogous to using training and testing data to check for overfitting.
\subsection{Analysis} \label{sec:analysis}
Gradient over-calculation is the main reason for model-fitting. Thus, \textbf{gradient balance}, which is to call not too many times of gradient calculation, is required. A straightforward solution is to eliminate the gradient calculations for the later timesteps, which have been found to be less active, as shown in Figure \ref{fig:overfitting}. This approach is referred to as Early Stopping (ES), where guidance is halted from the $200^{th}$ timestep onwards, continuing until the $0^{th}$ timestep.

\textbf{Early Stopping}: Figure~\ref{fig:analysis} demonstrates that ES suffers from the \textit{forgetting} problem, where on-sampling classification loss increases during the remaining sampling process, negatively impacting the generative outputs. This suggests that the guidance requires the property of \textbf{continuity}, meaning the gap between consecutive guidance steps must not be too large to prevent the \textit{forgetting} problem. \begin{wrapfigure}{r}{0.4\textwidth}
    \begin{center}
        \includegraphics[width=0.4\textwidth]{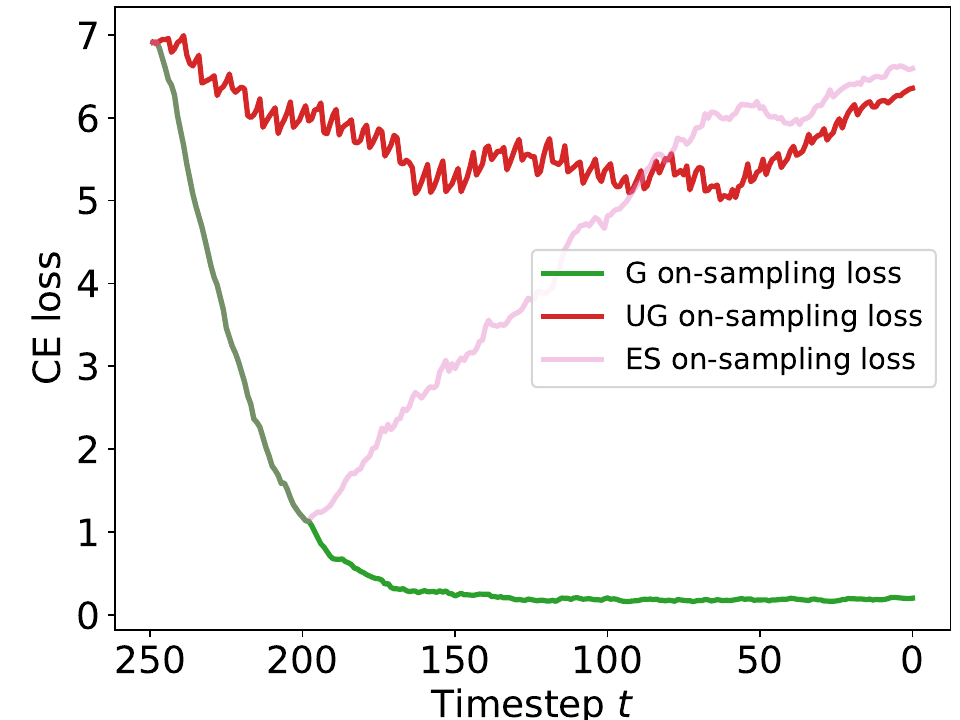}
        \caption{\small\textit{G is denoted for vanilla guidance, UG is the uniform skipping scheme, and ES is the early stopping scheme. The graph shows that UG suffers from the non-convergence problem, and ES suffers from the forgetting problem.}}
        \label{fig:analysis}
    \end{center}
    \vskip -1.1cm
\end{wrapfigure}

\textbf{Uniform skipping guidance}: We try an alternative approach which is called Uniform Skipping Guidance (UG). In UG, 50 guidance steps are evenly distributed across 250 sampling steps, with guidance applied every five steps. This ensures continuity throughout the sampling process, mitigating the \textit{forgetting} problem. However, as shown in Figure~\ref{fig:overfitting}, UG encounters the issue of \textit{non-convergence}, where the classification magnitude is too weak and becomes overshadowed by the denoising signals from the diffusion models, leading to poor conditional information. Thus, a guidance must require another property, which is \textbf{magnitude sufficiency}.

In summary, vanilla guidance faces the issue of \textit{model-fitting}, while ES and UG fail due to the \textit{forgetting} and \textit{non-convergence} problems, respectively. Therefore, the primary goal of our proposed method is to meet three key conditions which are \textbf{gradient balance}, \textbf{guidance continuity} and \textbf{magnitude sufficiency}.
\subsection{Compress Guidance} \label{sec:compress}
To avoid calculating too much gradient, we propose to utilize the gradient from the previous guidance step at several next sampling steps, given that the gradient magnitude difference between two consecutive sampling steps is not too significant. By doing this, we can satisfy \textbf{magnitude sufficiency} without re-calculating the gradient at every sampling step. Note that the gradient directions have not been updated since the last guidance step, resulting in the  \textbf{gradient balance}. Since all the sampling step receives a guidance signal, the \textbf{continuity} is guaranteed. Start with the Eq. \ref{eq:grad2}, we have the sampling scheme as below:

\begin{equation}\label{eq:dup}
    \xb_{t-1} = \begin{cases}
     \xb_{t} -\gamma_1 \nabla D_{KL}[q(\Tilde{\xb_0}|\xb_t) || q(\xb_0)] - \gamma_2  \nabla D_{KL}[q(\hat{y}|\xb_t) || q(y)], \quad & \text{if}\, t \in G\\
      \xb_{t} -\gamma_1 \nabla D_{KL}[q(\Tilde{\xb_0}|\xb_t) || q(\xb_0)] - \gamma_2  \Gamma_t, \quad & \text{otherwise}
\end{cases}
\end{equation}

The set $G$ is the set of time-steps for which the gradient will be calculated. $\Gamma$ is a variable used to store the calculated gradient from the previous sampling step, $\Gamma_t$ is updated as:

\begin{equation}\label{eq:gradup}
    \Gamma_{t-1} = \begin{cases}
     \nabla D_{KL}[q(\hat{y}|\xb_t) || q(y)], \quad & \text{if}\, t \in G\\
      \Gamma_t \quad & \text{otherwise}
\end{cases}
\end{equation}

In practice, we find out that instead of duplicating gradients as in Eq.~\ref{eq:dup}, we can slightly improve the performance by compressing the duplicated gradients into one guidance step instead of providing guidance to all sampling as in Eq.\ref{eq:dup}. We name this method as \textit{Compress Guidance}.We modify the sampling equation as below:

\begin{equation}\label{eq:dup2}
    \xb_{t-1} = \begin{cases}
     \xb_{t} -\gamma_1 \nabla D_{KL}[q(\Tilde{\xb_0}|\xb_t) || q(\xb_0)] - \gamma_2  \sum^{G_{i + 1}}_{t=G_i} \Gamma_t, \quad & \text{if}\, t = a_i \\
      \xb_{t} -\gamma_1 \nabla D_{KL}[q(\Tilde{\xb_0}|\xb_t) || q(\xb_0)] , \quad & \text{otherwise}
\end{cases}
\end{equation}

One of the algorithm's assumptions is that the magnitude is mostly the same for two consecutive sampling steps. From Appendix \ref{app:graddiff}, we observe that the classification gradient magnitude difference between two consecutive sampling steps is often larger in the early stage of the sampling process. Thus, we propose a method that distributes more guidance toward the early sampling stage and sparely at the end of the process. This will help to avoid the significant accumulation of magnitude differences in the early stage and helps to deliver better performance as well as reducing the number of guidance steps. The scheme is defined as Eq.~\ref{eq:guidancesteps}.
\begin{equation}\label{eq:guidancesteps}
    G_i = T - \lfloor \frac{T}{|G|^k} i^k \rfloor \quad\forall 0 \le i \le l, k \in [0; +\infty]
\end{equation}
\begin{theorem}\label{theo:earl}
    When $k \rightarrow +\infty$, the guidance timesteps will be distributed more toward the early stage of the sampling process.
\end{theorem}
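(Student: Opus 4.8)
The claim is essentially a monotonicity statement about the map $i \mapsto G_i = T - \lfloor \frac{T}{|G|^k} i^k \rfloor$ as $k$ varies, so the plan is to reduce it to comparing the ``density'' of guidance timesteps near the start (large $G_i$, i.e.\ $i$ small relative to $|G|$) for two exponents $k_1 < k_2$. First I would drop the floor function, since it only contributes an $O(1)$ additive perturbation to each $G_i$ while the quantity of interest scales with $T \gg |G|$; this lets me work with the smooth profile $g_k(u) = T(1 - u^k)$ where $u = i/|G| \in [0,1]$ is the normalized index. Then ``distributed more toward the early stage'' should be made precise as: for each fixed fraction $u \in (0,1)$ of the budget spent, the corresponding timestep $g_k(u)$ is larger (closer to $T$) when $k$ is larger — equivalently, the first guidance steps get pushed up toward $T$ and the later indices are packed into a thin layer near $t=0$.

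The key computation is then elementary: for fixed $u \in (0,1)$, the function $k \mapsto u^k$ is strictly decreasing (since $\ln u < 0$), hence $k \mapsto g_k(u) = T(1-u^k)$ is strictly increasing, and $g_k(u) \to T$ as $k \to +\infty$ for every $u < 1$. So in the limit all but a vanishing fraction of the guidance indices map to timesteps arbitrarily close to $T$, i.e.\ into the early stage, which is the assertion. I would also record the complementary endpoint behaviour $g_k(1) = 0$ for all $k$ to note that the very last guidance step stays pinned at $t=0$, and that the ``crossover'' region where $g_k(u)$ transitions from near $T$ to near $0$ shrinks to the single point $u=1$ as $k\to\infty$; one can quantify this by solving $u^k = 1/2$ to get $u = 2^{-1/k} \to 1$, showing the half-way timestep index fraction tends to $1$.

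To close the argument I would translate back to the discrete setting: given $\varepsilon > 0$, for all $i \le (1-\varepsilon)|G|$ we have $u = i/|G| \le 1-\varepsilon$, so $G_i \ge T(1 - (1-\varepsilon)^k) - 1$, which exceeds $T - \delta$ once $k$ is large enough; hence a $(1-\varepsilon)$-fraction of the guidance steps lie in $[T-\delta, T]$, the early stage, for $k$ sufficiently large. The main obstacle is not any hard estimate but rather pinning down a precise, defensible formalization of the informal phrase ``distributed more toward the early stage'' — the statement as written is qualitative, so the real work is choosing the right quantitative surrogate (monotonicity of $g_k(u)$ in $k$ for each fixed $u$, plus the $k\to\infty$ concentration near $t=T$) and checking that the floor function genuinely does not interfere, which needs only $T/|G|^k \cdot i^k$ to be handled carefully when $i$ is comparable to $|G|$ but is harmless for the regime $i \le (1-\varepsilon)|G|$ that the conclusion concerns.
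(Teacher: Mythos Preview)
Your proposal is correct and takes essentially the same approach as the paper: both reduce the claim to the elementary fact that for fixed $u = i/|G| \in (0,1)$ the map $k \mapsto u^k$ is decreasing, so $G_i = T - \lfloor T u^k \rfloor$ is nondecreasing in $k$ and tends to $T$ as $k\to\infty$. Your treatment is more careful than the paper's --- you handle the floor explicitly as an $O(1)$ perturbation and give a quantitative concentration statement for the limit --- but the underlying argument is identical.
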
 \begin{theorem}\label{theo:late}
    When $k < 1$ and $k \rightarrow 0$, the guidance timesteps will be distributed more toward the late stage of the sampling process.
\end{theorem}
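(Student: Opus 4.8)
The plan is to reduce Theorem~\ref{theo:late} to a single monotonicity-in-$k$ computation for the profile $i \mapsto G_i$ of Eq.~\ref{eq:guidancesteps}; the argument is dual to the one for Theorem~\ref{theo:earl}, and both are opposite ends of the same monotone family. First I would remove the floor and normalize, writing $u = i/|G| \in [0,1]$ so that $G_i = T\bigl(1 - u^{k}\bigr)$ up to an additive error of at most $1$ in timestep units coming from $\lfloor\cdot\rfloor$, which is negligible relative to $T$. Since the reverse process runs from $t=T$ (pure noise, the \emph{early} stage) down to $t=0$ (the \emph{late} stage), I would make the informal statement precise as follows: for every fixed threshold $\tau \in (0,T)$, the fraction of guidance indices $i$ with $G_i \ge \tau$ — those lying in the early stage — is small, and moreover decreases as $k$ decreases.

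Next I would compute that fraction in closed form. Because $u \mapsto T(1-u^{k})$ is strictly decreasing, $G_i \ge \tau$ is equivalent to $u^{k} \le 1-\tau/T$, i.e.\ $u \le (1-\tau/T)^{1/k}$; hence the early-stage share of the (roughly $|G|$) guidance steps equals $(1-\tau/T)^{1/k}$, up to the $\lfloor\cdot\rfloor$ rounding. Since $0 < 1-\tau/T < 1$ for every $\tau \in (0,T)$, the map $k \mapsto (1-\tau/T)^{1/k}$ is strictly increasing, and $(1-\tau/T)^{1/k} \to 0$ as $k \to 0^{+}$. Thus for every $\tau > 0$ the early-stage share of guidance steps vanishes in the limit, so asymptotically all guidance steps accumulate in the late stage; equivalently, the median guidance timestep $T\bigl(1 - 2^{-k}\bigr) \to 0$. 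The same identity with $k \to +\infty$ gives $(1-\tau/T)^{1/k}\to 1$, which is precisely Theorem~\ref{theo:earl}, so the two results fall out of one computation.

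The main obstacle is making the conclusion precise in the degenerate limit and controlling the floor. As $k \to 0^{+}$ we have $G_i = T(1-u^{k}) \to 0$ for every $i$ with $u>0$, so after rounding many indices collapse onto the same small timestep and the number of \emph{distinct} guidance steps shrinks; I would therefore phrase the result for the empirical distribution function of the multiset $\{G_0,\dots,G_l\}$ rather than for the distinct set $G$, note that the $\lfloor\cdot\rfloor$ error is uniformly at most $1$ in timestep units and so does not affect the $k\to 0^{+}$ limit of the early-stage fraction, and restrict $\tau$ to $(0,T)$ so that the always-present endpoints $G_0 \approx T$ and $G_l \approx 0$ do not interfere with the counting argument. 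A secondary, routine step is to check directly from Eq.~\ref{eq:guidancesteps} that $l \approx |G|$, so that ``fraction of the $|G|$ steps'' is well posed, and to record the analogous inequality $(1-\tau/T)^{1/k'} < (1-\tau/T)^{1/k}$ for $0<k'<k$, which delivers the monotone (not merely asymptotic) shift toward the late stage as $k$ decreases.
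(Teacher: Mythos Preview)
Your argument is correct and rests on the same monotonicity the paper uses: for fixed $u=i/|G|\in(0,1)$ the value $u^{k}$ increases as $k\downarrow 0$, so $G_i\approx T(1-u^{k})$ decreases and tends to $0$, i.e.\ toward the late stage. The paper's own proof stops at this pointwise statement (each $G_i^{(k)}\to 0$), whereas you additionally package it as the early-stage fraction $(1-\tau/T)^{1/k}\to 0$, handle the floor explicitly, and unify Theorems~\ref{theo:earl} and~\ref{theo:late} in one computation; this is more careful, but the underlying idea is identical.
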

 The proposed solution to select the timesteps for guidance as Eq.\ref{eq:guidancesteps} allows us to choose the number of timesteps we will do guidance and how to distribute these timesteps along the sampling process by adjusting the $k$ values. The full proof of Theorem \ref{theo:earl} and \ref{theo:late} is written in the appendix.

\section{Experimental results}
\begin{figure}
    \centering
    \includegraphics[width=\textwidth]{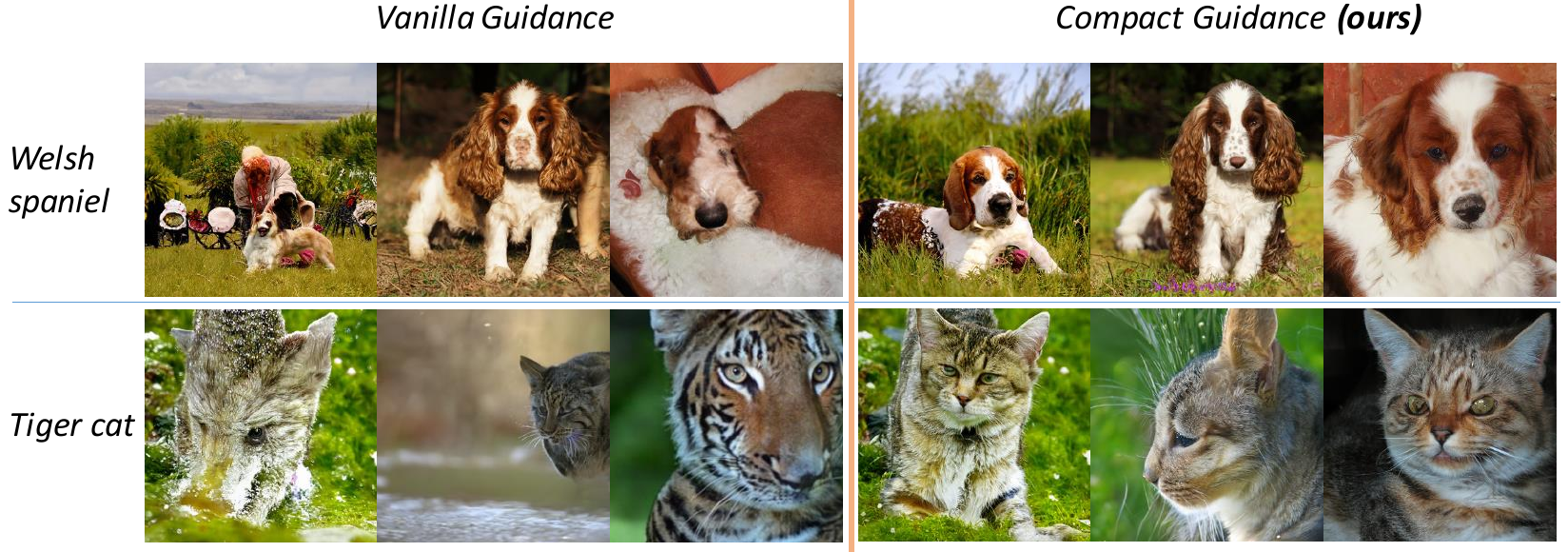}
    \caption{\small \textit{Qualitative results on ImageNet256x256. Left: Vanilla guidance applied at all timesteps. Right: Compress Guidance applied at 50 out of 250 timesteps. Compress Guidance reduces over-emphasized features, correcting weird and incorrect details. Further results are in Appendix\ref{app:qual}}}
    \label{fig:qualfeature}
\end{figure}
\begin{figure}
    \centering
    \includegraphics[width=\textwidth]{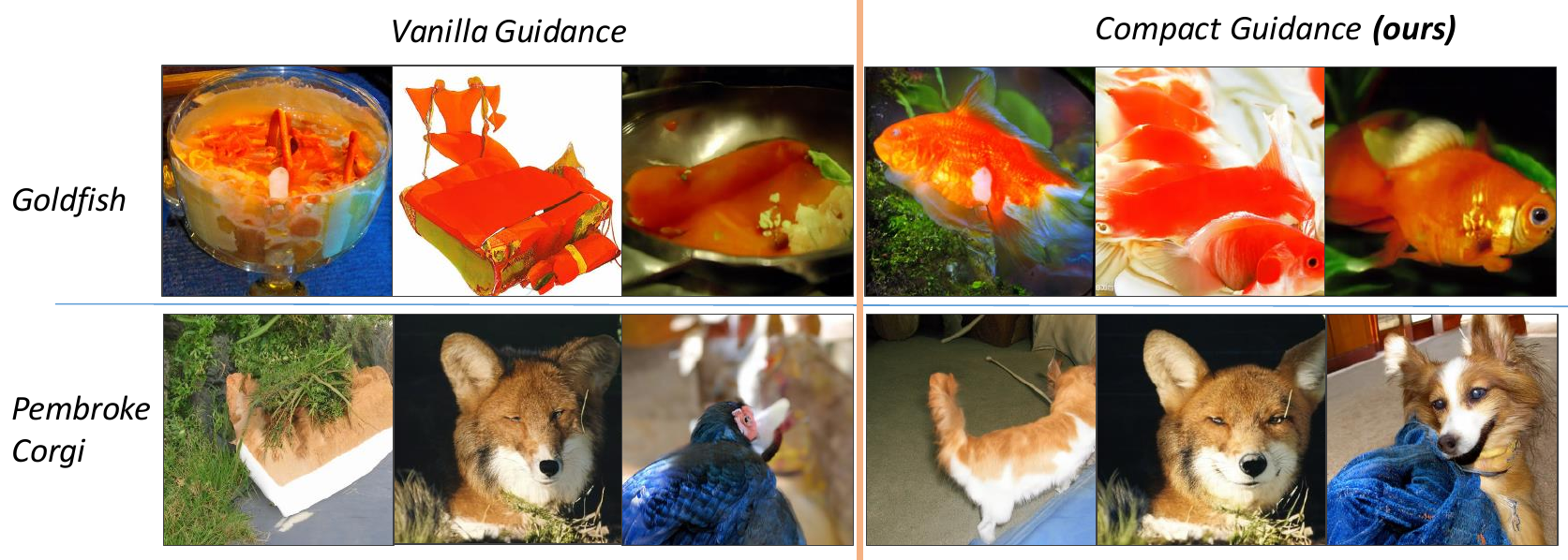}
    \caption{\small\textit{Qualitative results on ImageNet256x256. Left: Vanilla guidance applied at all timesteps. Right: Compress Guidance applied at 50 of 250 timesteps. Compress Guidance corrects misclassification by the on-sampling classifier, preventing out-of-class image generation and restoring accurate class information. More qualitative results are shown in Appendix\ref{app:qual}}}
    \label{fig:qualsample}
    \vskip -0.7cm
\end{figure}
\textbf{Setup} Experiments are conducted on pretrained Diffusion models on \textit{ImageNet 64x64}, \textit{ImageNet 128x128}, \textit{ImageNet 256x256} and \textit{MSCOCO}. The base Diffusion models utilized for label condition sampling task are ADM \cite{dhariwal2021diffusion} and CADM \cite{dhariwal2021diffusion} for classifier guidance, DiT\cite{peebles2023scalable} for classifier-free guidance (CFG) \cite{ho2022classifier}, GLIDE\cite{nichol2021glide} for CLIP text-to-image guidance and Stable Diffusion \cite{rombach2022high} for text-to-image classifier-free guidance. Other baselines we also do comparison is BigGAN \cite{brock2018large}, VAQ-VAE-2 \cite{zhao2020feature}, LOGAN \cite{wu2019logan}, DCTransformers \cite{nash2021generating}.  FID/sFID, Precision and Recall are utilized to evaluate image quality and diversity measurements. We denote Compress Guidance as "-CompG"  and "-G" as vanilla guidance, "-CFG" is the CFG, and "-CompCFG" is our proposed Compress Guidance applying on CFG. Full results with details of the experimental set up are discussed in Appendix \ref{app:exp:setup} and \ref{exp:full}.

\subsection{Classifier Guidance}\label{clsg}

For classifier guidance, we distinguish this guidance scheme into two types due to its behaviour discrepancy when applying the guidance. The first type is classifier guidance on the unconditional diffusion model, and the second is classifier guidance on the conditional diffusion model. 

\textbf{Guidance with unconditional diffusion model} Guidance with unconditional model provides diffusion model both conditional information and image quality improvement \cite{dhariwal2021diffusion}. Table \ref{tab:im64_256_uncond} shows the improvement using CompactGuidance (CG). The results show three main improvements. First, there is an improvement in the quantitative results of FID, sFID, and Recall values, indicating an improvement in generated image qualities and diversity. Second, we further validate the image quality and diversity improvement in Figure \ref{fig:qualfeature} and \ref{fig:qualsample}. Third, the proposed method offered a significant improvement in running time where we reduced the number of guidance steps by 5 times and reduced the running time by 42\% on ImageNet64x64 and 23\% on ImageNet256x256.

\textbf{Guidance with conditional diffusion model} Unlike the unconditional diffusion model, guidance in the conditional diffusion model does not aim to provide conditional information. Therefore, the effect of guidance is less significant than guidance on the unconditional diffusion model. Table \ref{tab:im64_256_cond} shows the diversity improvement based on Recall values compared to vanilla guidance. Furthermore, CompG reduced the guidance steps by 5 times and reduced the sampling time by 39.79\% ,  29.63\% , and 22\% on ImageNet64x64, 128x128 and 256x256, respectively. 

\begin{table}[!ht]
\vskip -0.1in
        \small
	\caption{\small \textit{Applying CompG to classifier guidance on unconditional diffusion model. ADM-CompG reduces the number of guidance timesteps by fivefold and increases the sampling process's running time by approximately 42\% on ImageNet64x64 and 23\% on ImageNet256x256. Notably, on ImageNet256x256, the running time of ADM-CompG is only 5\% higher compared to the unguided sampling process. In terms of performance, ADM-CompG significantly outperforms ADM and ADM-G across most metrics.}}
	\label{tab:im64_256_uncond} 
	\begin{center}
				\begin{tabular}{lccccccr}
					\toprule
					Model & $|G|$ ($\downarrow$) & GPU hours ($\downarrow$) & FID  ($\downarrow$) & sFID ($\downarrow$) & Prec ($\uparrow$) & Rec ($\uparrow$)\\
					\midrule[2pt]
					\multicolumn{3}{l}{\textbf{ImageNet 64x64}}\\
                         \midrule
                        ADM  (No guidance)          & 0  &     26.33   & 9.95          &  6.58              & 0.60             & 0.65 \\    
                        \hdashline
                        ADM-G              & 250       &  54.86 & 6.40          &9.67             & \textbf{0.73}                 & 0.54 \\
                        \textbf{ADM-CompG}  &\textbf{ 50 }     &  \textbf{31.80}    & \textbf{5.91}          & \textbf{8.26}              & 0.71                & \textbf{0.56} \\
					\midrule[2pt]
					\multicolumn{3}{l}{\textbf{ImageNet 256x256}}\\
					\midrule
                        ADM  (No guidance)          & 0  &    245.37    & 26.21          & 6.35              & 0.61             & 0.63 \\    
                        \hdashline
                        ADM-G              & 250       &  334.25 & 11.96          & 10.28              & 0.75                 & 0.45 \\
                        \textbf{ADM-CompG}  &\textbf{ 50 }     & \textbf{258.33 }     & \textbf{11.65}          & \textbf{8.52}              & \textbf{0.75 }                 & \textbf{0.48} \\
                        \bottomrule
				\end{tabular}
	\end{center}
	\vskip -0.2cm
\end{table}

\begin{table}[!ht]
\vskip -0.1in
        \small
	\caption{\small\textit{Applying CompG to classifier guidance in conditional diffusion models and classifier-free guidance significantly improves performance. CADM-CompG outperforms CADM and slightly surpasses CADM-G, as CADM-G depends on both the classifier and conditional diffusion model. CompG reduces the number of guidance timesteps by fivefold and significantly increases the sampling process's running time across all three ImageNet resolutions. CompG for classifier-free guidance also reduces the number of guidance steps by tenfold and achieves significantly better results. }}
	\label{tab:im64_256_cond} 
	\begin{center}
				\begin{tabular}{lccccccr}
					\toprule
					Model & $|G|$ ($\downarrow$)  & GPU hours ($\downarrow$) & FID  ($\downarrow$) & sFID ($\downarrow$) & Prec ($\uparrow$) & Rec ($\uparrow$)\\
					\midrule[2pt]
					\multicolumn{3}{l}{\textbf{ImageNet 64x64}}\\
                         \midrule
                        CADM  (No guidance)          & 0  &     26.64   & 2.07          &  4.29              & 0.73             & 0.63 \\    
                        \hdashline
                        CADM-G              & 250       &  53.52 & 2.47          & 4.88             & \textbf{0.80}                 & 0.57 \\
                        \textbf{CADM-CompG}  &\textbf{ 50 }     &  \textbf{32.22}    & \textbf{1.82}          & \textbf{4.31}              & 0.76                & \textbf{0.61} \\
                        \hdashline
                        CADM-CFG              & 250       &  54.97 & 1.89          & 4.45             & \textbf{0.77}                 & 0.60 \\
                        \textbf{CADM-CompCFG}  &\textbf{ 25 }     &  \textbf{29.29}    & \textbf{1.84}          & \textbf{4.38}              & \textbf{0.77 }               & \textbf{0.61} \\
					\midrule[2pt]
                        \multicolumn{3}{l}{\textbf{ImageNet 128x128}}\\
                         \midrule
                        CADM  (No guidance)          & 0  &     61.60   & 6.14          &  4.96              & 0.69             & 0.65 \\    
                        \hdashline
                        CADM-G              & 250       &  94.06 & 2.95          & 5.45             & \textbf{0.81}                 & 0.54 \\
                        \textbf{CADM-CompG}  &\textbf{ 50 }     &  \textbf{66.19}    & \textbf{2.86}          & \textbf{5.29}              & 0.79                & \textbf{0.58} \\
					\midrule[2pt]
					\multicolumn{3}{l}{\textbf{ImageNet 256x256}}\\
					\midrule
                        CADM  (No guidance)          & 0  &     240.33  & 10.94         &  6.02              & 0.69             & 0.63 \\    
                        \hdashline
                        CADM-G              & 250       &  336.05 & 4.58          &  \textbf{5.21}              & 0.81             & 0.51 \\
                        \textbf{CADM-CompG}  &\textbf{ 50 }     &  \textbf{259.25}    & \textbf{4.52}          & 5.29             & \textbf{0.82 }               & \textbf{0.51} \\
                        \hdashline
                        DiT-CFG              & 250       &  75.04 & 2.25         & \textbf{4.56 }             & 0.82                 & 0.58 \\
                        \textbf{DiT-CompCFG}  &\textbf{ 22 }     & \textbf{42.20}     & \textbf{2.19}          & 4.74              & \textbf{0.82 }                 & \textbf{0.60} \\
                        \bottomrule
				\end{tabular}
	\end{center}
	\vskip -0.9cm
\end{table}

\subsection{Classifier-free guidance}\label{clsfree}

Classifier-free guidance is a different form of guidance from classifier guidance. Although classifier-free guidance does not use an explicit classifier for guidance, the diffusion model serves as an implicit classifier inside the model as discussed in Appendix~\ref{app:clsfree}. We hypothesize that classifier-free guidance also suffers from a similar problem with classifier guidance. We apply the Compress Guidance technique on classifier-free guidance (CompCFG) and demonstrate the results in Table \ref{tab:im64_256_cond}.

\subsection{Text-to-Image Guidance}\label{text2img}
Besides using labels for conditional generation, text-to-image allows users to input text conditions and generate images with similar meanings. This task has recently become one of the most popular tasks in generative models.  We apply the CompactGuidance on this task with two types of guidances, which are CLIP-based guidance (GLIDE) \cite{nichol2021glide} and classifier-free guidance (Stable Diffusion) \cite{rombach2022high}. The results are shown in Table \ref{tab:im64_256_glide} and \ref{tab:im64_256_sd} and Figure \ref{fig:hook1}.
\begin{table}[!htb]
    \vskip -0.3cm
    \begin{minipage}{.42\linewidth}
      \small
      \caption{\small \textit{Applying CompG on text-to-image GLIDE classifier-based guidance on MSCoco datasets.}}
	\label{tab:im64_256_glide} 
 \centering
 \scalebox{0.75}{
				\begin{tabular}{lccc}
					\toprule
					Model & $|G|$ ($\downarrow$) & GPU hrs ($\downarrow$) & ZFID  ($\downarrow$) \\
					\midrule[2pt]
					\multicolumn{3}{l}{\textbf{MSCOCO 64x64}}\\
                         \midrule
                        GLIDE-G              & 250       &  34.04 & 24.78        \\
                        \textbf{GLIDE-CompG}  &\textbf{ 25 }     &  \textbf{20.93}    & \textbf{24.5}       \\
					\midrule[2pt]
					\multicolumn{3}{l}{\textbf{MSCOCO 256x256}}\\
					\midrule
                        GLIDE-G              & 250       &  66.84 & 34.78        \\
                        \textbf{GLIDE-CompG}  &\textbf{ 35 }     &  \textbf{37.55}    & \textbf{33.12}       \\
                        \bottomrule
				\end{tabular}}
    \end{minipage}\hfill
    \begin{minipage}{.55\linewidth}
       \small
	\caption{\small\textit{Applying CompG on Stable Diffusion classifier-free guidance on MSCoco256x256 dataset. CompG significantly improve both qualitative results, as in Figure \ref{fig:hook1}, and quantitative results, as below.}}
	\label{tab:im64_256_sd} 
        \centering
        \scalebox{0.75}{
				\begin{tabular}{lccccr}
					\toprule
					Model & $|G|$ ($\downarrow$) & GPU hrs ($\downarrow$) & FID  ($\downarrow$) & IS ($\uparrow$) & CLIP ($\uparrow$)\\
					\midrule[2pt]
					\multicolumn{3}{l}{\textbf{MSCOCO 256x256}}\\
					\midrule
                        SD-CFG              & 50       &  54 & 16.04     &32.34  & 28\\
                        \textbf{SD-ComptCFG}  &\textbf{ 8 }     &  \textbf{35}    & \textbf{14.04} & \textbf{35.90 }   & \textbf{30}  \\
                        \bottomrule
				\end{tabular}}
    \end{minipage} 
    \vskip -0.2cm
\end{table}
\subsection{Ablation study}
\textbf{Solving the model-fitting problem} One of the main contributions of the proposed method is its help in alleviating the model-fitting problem. Due to the closeness between the model-fitting problem and overfitting problems, we use an Early stopping scheme for comparison. For CompG, we utilize 50 guidance steps. Thus, we also turn off guidance for the ES scheme after 50 guidance calls. Figure \ref{fig:solvo} for details. \begin{wraptable}{l}{0.45\textwidth}
    \vskip -0.3cm
    \centering
     \caption{\small\textit{Model-fitting on ImageNet64x64 samples. ES suffers from the forgetting problem and has low performance. CompG achieves higher both on on-sampling and off-sampling acc.}}
     \scalebox{0.8}{
    \begin{tabular}{lccc}
					\toprule
					   Guidance               & On-samp.       & Off-samp.  &  Resnet \\        
					\\
					\midrule
                        Vanilla        & 90.8     & 62.5 &      34.17            \\
                        Early Stopping &      63.05      &    55.22   &          33.55         \\
                        CompG  (ours)        & \textbf{91.2 }   & \textbf{64.2} &    \textbf{34.93 }               \\
					\bottomrule
				\end{tabular}}
    \label{tab:im64_acc}
    \vskip -0.7cm
\end{wraptable}
\begin{figure}
    \centering
    \begin{subfigure}[t]{0.49\textwidth}
        \centering
        \includegraphics[width=\textwidth]{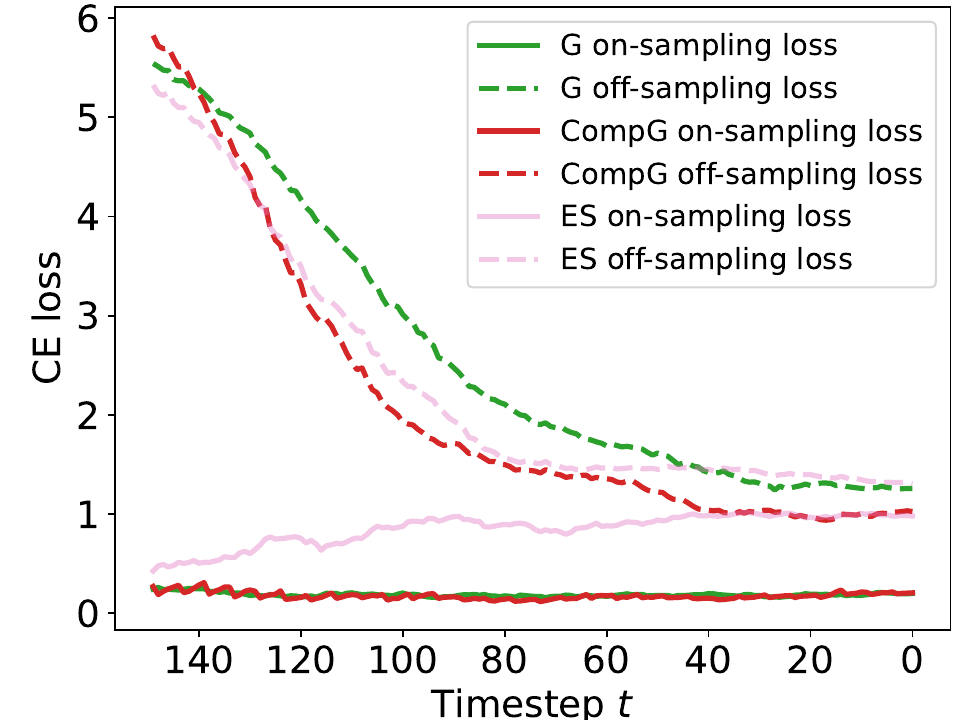}
    \end{subfigure}
    ~
    \begin{subfigure}[t]{0.49\textwidth}
        \centering
        \includegraphics[width=\textwidth]{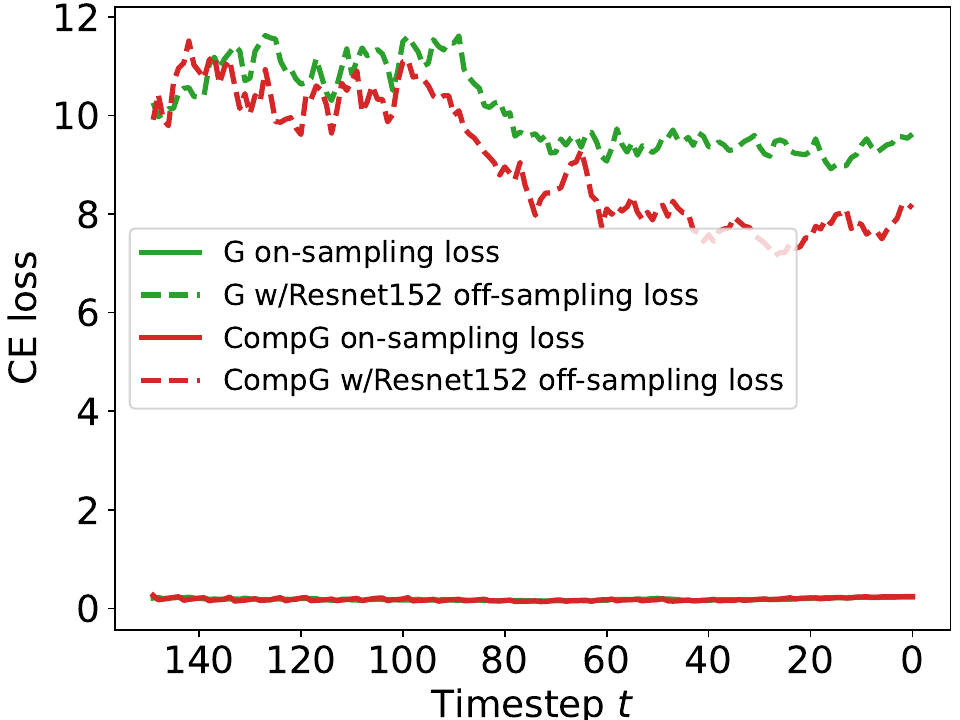}
    \end{subfigure}
    \caption{\small\textit{CompG reduces the gap between off-sampling and on-sampling loss, mitigating the model-fitting issue compared to other schemes. The ES scheme concludes guidance after 50 steps and suffers from forgetting problems where the on-sampling loss increases along with the sampling process.}}
    \label{fig:solvo}
    \vskip -0.5cm
\end{figure}

\textbf{Distribution guidance timesteps toward the early stage of the process:} According to the Theorem \ref{theo:earl}, by adjusting $k$, we can distribute the timesteps toward the early stage or the late stage of the sampling process. Table \ref{tab:im64_absk} shows the comparison between $k$ values. With $k=1.0$, guidance steps are distributed uniformly. Larger $k$ results in comparable performance but more fruitful running time and the number of guidance steps. 

\begin{table}[!ht]
        \small
	\caption{\small \textit{ImageNet64x64. Experimental results with increasing $k$. According to Theorem \ref{theo:earl}, increasing $k$ guides distribution towards early timesteps, resulting in comparable performance comparable to full guidance and better than without guidance. This scheme leads to fewer guidance steps and lower running costs.}}
	\label{tab:im64_absk} 
	\begin{center}
				\begin{tabular}{lccccccr}
					\toprule
					Model   & k & $|G|$ ($\downarrow$)  & GPU hours ($\downarrow$) & FID  ($\downarrow$) & sFID ($\downarrow$) & Prec ($\uparrow$) & Rec ($\uparrow$)\\
                         \midrule
                        CADM  (No guidance) &    -     & 0  &     26.64   & 2.07          &  4.29              & 0.73             & 0.63 \\    
                        \hdashline
                        CADM-ComptG& 1.0  & 50      &  32.22   & 1.91          & 4.38              & \textbf{0.77 }               & 0.61 \\
                        CADM-ComptG& 2.0  & 47      &  31.18    & 1.95          & 4.40             &0.76                & 0.62 \\
                        CADM-ComptG& 3.0  & 41      &  30.54    & 1.94          & 4.42             & 0.76                & 0.62 \\
                        CADM-ComptG& 4.0  & 36     &  30.02    & 1.89          & 4.35             & 0.76                & 0.62\\
                        CADM-ComptG& 5.0  & 32     &  29.81    & \textbf{1.82}          & \textbf{4.31 }            & 0.76                & \textbf{0.62}\\
                        CADM-ComptG& 6.0  & 28     &  \textbf{29.12}    & 1.93          & 4.35             & 0.75                & 0.62\\
                        \bottomrule
				\end{tabular}
	\end{center}
	\vskip -0.5cm
\end{table}

\textbf{Trade-off between computation and image quality}
Compact rate is the total number of sampling steps over the number of guidance steps $\frac{T}{|G|}$. The larger the compact rate, the lower the model's guidance, hence the lower running time. Figure \ref{fig:compact} shows the effect of using fewer timesteps on IS, FID and Recall as in Figure \ref{fig:comptis}, \ref{fig:comptfid} and \ref{fig:comptrec} in Appendix. 

\section{Conclusion}
The paper quantifies the problem of model-fitting, an analogy to the problem of overfitting in training deep neural networks by observing on-sampling loss and off-sampling loss. Compress Guidance is proposed to alleviate the situation and significantly boost the Diffusion Model's performance in qualitative and quantitative results. Furthermore, applying Compress Guidance can reduce the number of guidance steps by at least five times and reduce the running time by around 40\%. Broader Impacts and Safeguards will be discussed in the Appendix.
\small
\bibliography{main}
\bibliographystyle{unsrtnat}






\newpage
\appendix

\section{Broader impact and safeguard}
The work does not have concerns about safeguarding since it does not utilize the training data. The paper only utilizes the pre-trained models from DiT \cite{peebles2023scalable}, ADM\cite{dhariwal2021diffusion}, GLIDE \cite{nichol2021glide} and Stable Diffusion \cite{rombach2022high}. The work fastens the sampling process of the diffusion model and contributes to the population of the diffusion model in reality. However, the negative impact might be on the research on a generative model where bad people use that to fake videos or images.

\section{Experimental setup}\label{app:exp:setup}
\textbf{Off-sampling classifier}: Off-sampling classifier is initialized as the parameters of the on-sampling classifier. We fine-tune the model with 10000 timesteps with the same loss for training the on-sampling classifier. The testing accuracy between the off-sampling classifier and the on-sampling classifier is shown in Table \ref{tab:offcl}

\begin{table}[htpb!]
    \centering
    \begin{tabular}{lr}
					\toprule
					   Evaluation Model               &  Accuracy             
					\\
					\midrule
                        \textit{On-sampling classifier   }                & 64.5\%               \\
                        \textit{Off-sampling classifier}              & 63.5\%                    \\
					\bottomrule
				\end{tabular}
    \caption{Evaluation of On-sampling classifier and Off-sampling classifier on ground-truth images.}
    \label{tab:offcl}
\end{table}

Figure \ref{tab:hyperparams} shows all the hyperparameters used for all experiments in the paper. Normally, since we skip a lot of timesteps that do guidance, the process will fall into the case of forgetting. To avoid this situation, we would increase the guidance scale significantly. The value of the guidance scale is often based on the compact rate $\frac{T}{|G|}$. A larger compact rate also indicates a larger guidance scale.

In Table \ref{tab:im64_acc} and Figure \ref{fig:solvo}, to achieve a fair comparison, we tune the guidance scale of CompG to achieve a similar Recall value with vanilla guidance. The reason is that the higher the level of diversity, the harder features can be recognized resulting higher loss and lower accuracy. If we don't configure similar diversity between two schemes, the one with higher diversity will always achieve lower accuracy and higher loss value. We want to avoid the case that the model only samples one good image for all. 

For all the tables, the models which are in bold are the proposed. 

\textbf{GPU hours}: All the GPU hours are calculated based on the time for sampling 50000 samples in ImageNet or 30000 samples in MSCoco.

All experiments are run on a cluster with 4 V100 GPUs.

\section{Full comparision}\label{exp:full}
Table \ref{tab:im64_256_cond_full} shows full comparison with different famous baselines.
\begin{table}[!ht]
\vskip -0.1in
        \small
	\caption{\textit{We show full results of the model compared to other models not related to guidance.}}
        \vskip -0.1in
	\label{tab:im64_256_cond_full} 
	\begin{center}
				\begin{tabular}{lccccccr}
					\toprule
					Model & $|G|$ ($\downarrow$)  & GPU hours ($\downarrow$) & FID  ($\downarrow$) & sFID ($\downarrow$) & Prec ($\uparrow$) & Rec ($\uparrow$)\\
					\midrule[2pt]
					\multicolumn{3}{l}{\textbf{ImageNet 64x64}}\\
                         \midrule
                         BigGAN                        & -      & -          & 4.06      & 3.96 & 0.79 & 0.48 \\
                         IDDPM                    & 0 &  28.32     & 2.90      & 3.78  & 0.73      & 0.62\\
                        CADM  (No guidance)          & 0  &     26.64   & 2.07          &  4.29              & 0.73             & 0.63 \\    
                        \hdashline
                        CADM-G              & 250       &  53.52 & 2.47          & 4.88             & \textbf{0.80}                 & 0.57 \\
                        \textbf{CADM-CompG}  &\textbf{ 50 }     &  \textbf{32.22}    & \textbf{1.91}          & \textbf{4.57}              & 0.77                & \textbf{0.61} \\
                        \hdashline
                        CADM-CFG              & 250       &  54.97 & 1.89          & 4.45             & \textbf{0.77}                 & 0.60 \\
                        \textbf{CADM-CompCFG}  &\textbf{ 25 }     &  \textbf{29.29}    & \textbf{1.84}          & \textbf{4.38}              & \textbf{0.77 }               & \textbf{0.61} \\
					\midrule[2pt]
                        \multicolumn{3}{l}{\textbf{ImageNet 128x128}}\\
                         \midrule
                         BigGAN                       & - & -           & 6.02 & 7.18           & 0.86          & 0.35 \\                                                               
                         LOGAN                       & -  & -           & 3.36          & -                  & -                & -  \\
                        CADM  (No guidance)          & 0  &     61.60   & 6.14          &  4.96              & 0.69             & 0.65 \\    
                        \hdashline
                        CADM-G              & 250       &  94.06 & 2.95          & 5.45             & \textbf{0.81}                 & 0.54 \\
                        \textbf{CADM-CompG}  &\textbf{ 50 }     &  \textbf{66.19}    & \textbf{2.86}          & \textbf{5.29}              & 0.79                & \textbf{0.58} \\
					\midrule[2pt]
					\multicolumn{3}{l}{\textbf{ImageNet 256x256}}\\
					\midrule
                        BigGAN         &-    & -         & 7.03          & 7.29              & \underline{0.87}     & 0.27    \\
                        DCTrans      & -   & -              & 36.51         & 8.24              & 0.36              & \underline{0.67}    \\
					VQ-VAE-2        & -   & -              & 31.11         & 17.38             & 0.36              & 0.57 \\
                        IDDPM            & -   & -              & 12.26         & 5.42              & 0.70              & 0.62 \\
                        CADM  (No guidance)          & 0  &     240.33  & 10.94         &  6.02              & 0.69             & 0.63 \\    
                        \hdashline
                        CADM-G              & 250       &  336.05 & 4.58          &  \textbf{5.21}              & 0.81             & 0.51 \\
                        \textbf{CADM-CompG}  &\textbf{ 50 }     &  \textbf{259.25}    & \textbf{4.52}          & 5.29             & \textbf{0.82 }               & \textbf{0.51} \\
                        \hdashline
                        DiT-CFG              & 250       &  75.04 & 2.25         & \textbf{4.56 }             & 0.82                 & 0.58 \\
                        \textbf{DiT-CompCFG}  &\textbf{ 22 }     & \textbf{42.20}     & \textbf{2.19}          & 4.74              & \textbf{0.82 }                 & \textbf{0.60} \\
                        \bottomrule
				\end{tabular}
	\end{center}
	\vskip -0.5cm
\end{table}

\section{Gradient magnitude difference between two consecutive sampling steps}\label{app:graddiff}
In this section, we observe that the classification gradient will likely vary significantly in the early stage of the sampling process. We sample 32 images of ImageNet64 using ADM-G (\cite{dhariwal2021diffusion}) with guidance classifier is the noise-aware trained classifier from ADM-G. The observation is shown in Fig \ref{fig:graddif}. \begin{wrapfigure}{l}{0.4\textwidth}
    \begin{center}
        \includegraphics[width=0.4\textwidth]{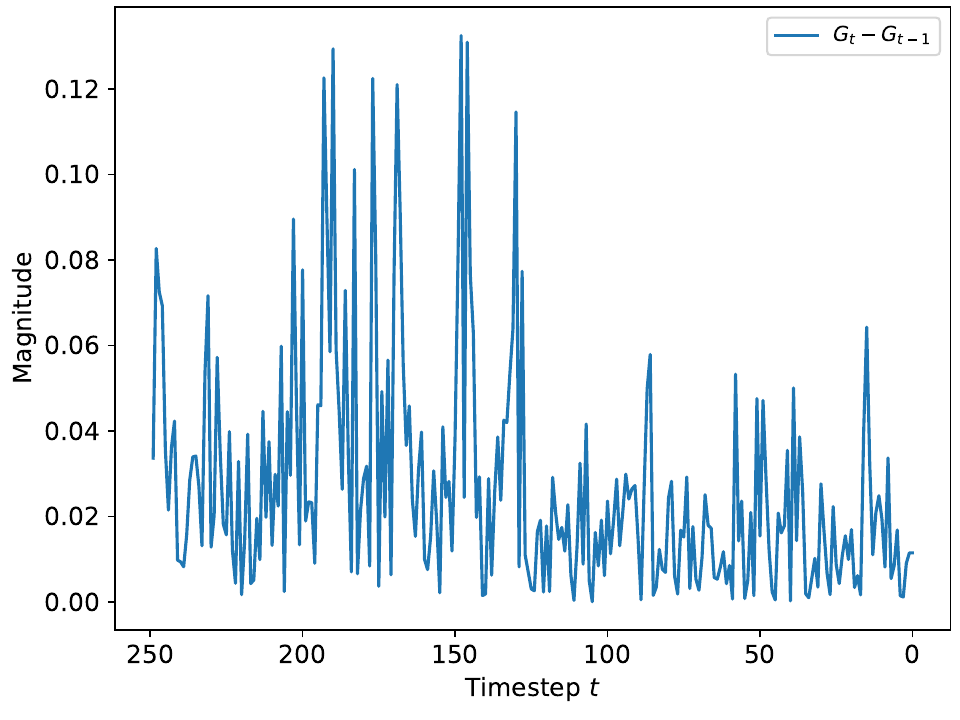}
        \caption{\small\textit{Gradient magnitude difference measured at two consecutive steps}}
        \label{fig:graddif}
    \end{center}
    \vskip -3.5cm
\end{wrapfigure}
\section{Mathematical details}

\textbf{Proof of Theorem \ref{theo:obj}}
\begin{proof} Given real data $\xb_0$, we sample two latent samples at two timestep $t_1 < t_2$. As a result $\xb_{t_1} = \sqrt{\balpha_{t_1}} \xb_0 + \sqrt{1 - \balpha_{t_1}} \epsilon $ and $\xb_{t_2} = \sqrt{\balpha_{t_2}} \xb_0 + \sqrt{1 - \balpha_{t_2}} \epsilon$. From $\xb_{t_1}$ and $\xb_{t_2}$, the prediction of real data has the form of $\Tilde{\xb}^{(t_1)}_0 = \frac{\xb_{t_1}  - \sqrt{1 - \balpha_{t_1}}\epsilon_{\theta}(\xb_{t_1}, {t_1})}{\sqrt{\balpha_{t_1}}}$ and $\Tilde{\xb}^{(t_2)}_0 = \frac{\xb_{t_2}  - \sqrt{1 - \balpha_{t_2}}\epsilon_{\theta}(\xb_{t_2}, {t_2})}{\sqrt{\balpha_{t_2}}}$ correspondingly. Replace $\xb_{t_1}$ and $\xb_{t_2}$ with $\xb_0$ and $\epsilon$, we have $\Tilde{\xb}^{(t_1)}_0 = \xb_0 + \frac{\sqrt{1 - \balpha_{t_1}}(\epsilon -  \epsilon_{\theta}(\xb_{t_1}, t_1))}{\sqrt{\balpha_{t_1}}}$ and $\Tilde{\xb}^{(t_2)}_0 = \xb_0 + \frac{\sqrt{1 - \balpha_{t_2}}(\epsilon -\epsilon_{\theta}(\xb_{t_2}, t_2))}{\sqrt{\balpha_{t_2}}}$. Thus $||\Tilde{\xb_0}^{(t_1)} - \xb_0|| = \frac{1 - \balpha_{t_1} ||\epsilon - \epsilon_{\theta}(\xb_{t_1}, t_1)||}{\balpha_{t_1}}$ and $||\Tilde{\xb_0}^{(t_2)} - \xb_0|| = \frac{1 - \balpha_{t_2} ||\epsilon - \epsilon_{\theta}(\xb_{t_2}, t_2)||}{\balpha_{t_2}}$. Since $\epsilon_{\theta}(\xb_{t_1}, t_1) \sim \epsilon_{\theta}(\xb_{t_2}, t_2)  \sim \epsilon$, $||\epsilon - \epsilon_{\theta}(\xb_{t_1}, t_1)|| \approx ||\epsilon - \epsilon_{\theta}(\xb_{t_2}, t_2)|| \approx \Delta$. This results in  $||\Tilde{\xb_0}^{(t_1)} - \xb_0|| = \frac{1 - \balpha_{t_1} }{\balpha_{t_1}} \Delta$ and $||\Tilde{\xb_0}^{(t_2)} - \xb_0|| = \frac{1 - \balpha_{t_2} }{\balpha_{t_2}} \Delta$.  $||\Tilde{\xb_0}^{(t_1)} - \xb_0|| < ||\Tilde{\xb_0}^{(t_1)} - \xb_0||$ since $\frac{1 - \balpha_{t_2} }{\balpha_{t_2}} > \frac{1 - \balpha_{t_1} }{\balpha_{t_1}} \ge 0, \, \forall t_2 > t_1$. As a result, the sampling of $\xb_{t-1} \sim q(\xb_{t-1}|\xb_t, \Tilde{\xb}_0)$ from timesteps $T$ to $0$ would result in the minimization of $||\Tilde{\xb_0}^{(t)} - \xb_0||$. Since $q(\xb_0)$ has the form of Gaussian, we can have the minimization of $||\Tilde{\xb_0}^{(t)} - \xb_0||$ would result in the minimization of $||q(\Tilde{\xb}_0) - q(\xb_0)||  = ||\frac{q(\Tilde{\xb}_0) q(\xb_t|\Tilde{\xb}_0)}{q(\xb_t)} - q(\xb_0)||$ since $\Tilde{\xb_0} \sim p_{\theta}(\Tilde{\xb}_0| \xb_t)$ with a deterministic forward of $\xb_t$ to $\epsilon_{\theta}$, we have $q(\Tilde{\xb}_0) \approx \frac{q(\Tilde{\xb}_0) q(\xb_t|\Tilde{\xb}_0)}{q(\xb_t)} = p_{\theta}(\Tilde{\xb}_0|\xb_t)$.
\newcommand{\qx}{q(\xb)}
\newcommand{\px}{p(\xb)}



Assume we have two density function $p(\xb)$ and $q(\xb)$. The KL divergence between these two has the form:
\begin{align}
    \int^{1}_{0} \px \log \frac{\px}{\qx} &=  \int^{1}_{0} \px\log(\px) - \px\log(\qx) d\xb  \\
                                                                        &= \int^{1}_{0} \px \log(\px) d\xb - \int^{1}_{0} \px \log(\px) + \px \log((\frac{\px}{\qx} - 1) + 1 ) d\xb\\
                                                                        &= \int^{1}_{0} -\px \log ((\frac{\qx}{\px} - 1) + 1) d\xb \\
                                                                        &= \int^{1}_{0} -(\qx - \px) + (\qx - \px)^2 (\frac{1}{\px} - \frac{1}{\qx}) d\xb \\
                                                                        &\le  \int^{1}_{0}  (\qx - \px)^2 (\frac{1}{\px} - \frac{1}{\qx}) d\xb \\
                                                                        & \le \int^{1}_{0} (\qx - \px)^2(\frac{1}{a} - \frac{1}{b}) d\xb = \frac{b - a}{ab} ||p - q||
\end{align}
Thus $D_{KL}(\px || \qx) \le  \frac{b - a}{ab} ||p - q|| $

Base on this bound we would have the minimization of $||p_{\theta}(\Tilde{\xb}_0|\xb_t) - q(\xb_0)||$ is equivalent to the minimization of $D_{KL}(q(\xb_0) || p_{\theta}(\Tilde{\xb}_0|\xb_t))$.
\end{proof}

\textbf{Proof of Theorem \ref{theo:earl}}
\begin{proof}
    Let $k_1 < k_2$  and $k_1, k_2 \in [1; +\infty]$, with $ \frac{T}{|G|^{k}} i^{k} = T (\frac{i}{|G|})^{k} $ and $\frac{i}{|G|} < 1$, we have:
    \begin{align}
          &(\frac{i}{|G|})^{k_1} \ge (\frac{i}{|G|})^{k_2} \\
         \Leftrightarrow &T (\frac{i}{|G|})^{k_1} \ge T (\frac{i}{|G|})^{k_2} \\
         \Leftrightarrow &\lfloor T (\frac{i}{|G|})^{k_1} \rfloor \ge \lfloor T (\frac{i}{|G|})^{k_2} \rfloor\\
         \Leftrightarrow & T - \lfloor T (\frac{i}{|G|})^{k_1} \rfloor  \le T - \lfloor T (\frac{i}{|G|})^{k_2} \rfloor
    \end{align}
    As a result, $G^{(k_1)}_i \le G^{(k_2)}_i \forall k_1, k_2 \ge 1$ and $k_1 < k_2$. With $k_2 \rightarrow +\infty$, $G^{(k_2)}_i$ is bounded by T. This means that larger $k$ values would result in the distribution of the timesteps toward the early stage of the sampling process.
\end{proof} 
\textbf{Proof of Theorem \ref{theo:late}}
\begin{proof}
    Similar to previous proof we have $G^{(k_1)}_i \le G^{(k_2)}_i \forall k_1, k_2 \ge 1$ and $k_1 < k_2$. This also mean that $G^{(k_1)}_i > G^{(1)}_i,\, \forall 0 \le k_1 < 1$ and if $k_1 \rightarrow 0$ then $G^{(k_1}_i \rightarrow 0$, hence all the $g_i \in G^{(k_1)_i}$ is bounded by $0$. As a result, by adjusting $k$ toward $0$, we would have the distribution of guidance steps toward the later stage of the sampling process
\end{proof} 
\section{CompG and classifier-free guidance}\label{app:clsfree}

We start from the noise sampling equation of the classifier-free guidance as:
\begin{align}
    \Tilde{\epsilon} &= (1 + w) \epsilon_{\theta}(\xb_t, c,t) - w \epsilon_{\theta}(\xb_t,t)\\
                    & = \epsilon_{\theta}(\xb_t, c,t) + w (\epsilon_{\theta}(\xb_t, c,t) - \epsilon_{\theta}(\xb_t,t)) \\
                    & = \epsilon_{\theta}(\xb_t, c,t) + w C
\end{align}
We can clearly see that $C$ stands for classification information as mentioned in \cite{pmlr-v202-dinh23a}. Replace the $\Tilde{\epsilon}$ to Eq.\ref{eq:grad1}, we have:
\begin{equation}\label{eq:grad3}
    \xb_{t-1} = \xb_t - \underbrace{(\frac{\sqrt{\alpha_t}  - 1}{\sqrt{\alpha_t}} \xb_t + \frac{1 - \alpha_t}{\sqrt{1 - \balpha_t} \sqrt{\alpha_t}} \epsilon_{\theta}(\xb_t, c,t) - \sigma_t \zb)}_{\text{Original denoising framwork}} - \underbrace{\frac{ \alpha_t - 1}{\sqrt{1 - \balpha_t}}wC}_{\text{classification information}}
\end{equation}
From this derivation, we can further apply the technique from CompG to the classification term in classifier-free guidance.
\section{Related work}
Diffusion Generative Models (DGMs) \cite{ho2020denoising, song2020score, vahdat2021score, song2020improved} have recently become one of the most popular generative models in many tasks such as image editing\cite{kawar2023imagic, huang2024diffusion}, text-to-image sampling \cite{rombach2022high} or image generation. Guidance is often utilized to improve the performance of DGMs \cite{dhariwal2021diffusion,ho2022classifier, bansal2023universal, liu2023more, epstein2023diffusion}. Besides improving the performance, the guidance also offers a trade-off between image quality and diversity [], which helps users tune their sampling process up to their expectations. Although guidance is beneficial in many forms, it faces extremely serious drawbacks of running time. For classifier guidance, the running time is around 80\% higher compared to the original diffusion model sampling time due to the evaluation of gradients at every sampling step. In contrast, classifier-free guidance requires the process to forward to the expensive diffusion model twice at every timestep. 

Previous works on improving the running time of DGMs involve the reduction of sampling steps \cite{song2020denoising, zhang2022fast} and latent-based diffusion models \cite{rombach2022high,peebles2023scalable}. Recently, the research community has focused on distilling from a large number of timesteps to a smaller number of timesteps \cite{salimans2022progressive, sauer2023adversarial, li2024snapfusion} or reducing the architectures of diffusion models \cite{li2024snapfusion}. However, most of these works mainly solve the problem of the expensive sampling of diffusion models. As far as we notice, none of the works have dealt with the exorbitant cost resulting from guidance.

\begin{table}[!th]
	\caption{All hyper-parameters required for reproducing the results. }
	\label{tab:hyperparams}
	\vskip 0.15in
	\begin{center}
		\begin{small}
			\begin{sc}
				\begin{tabular}{lccccr}
					\toprule
					Model & Dataset & $k$ & $s$ & $|G|$ & time-steps \\
					\midrule[2pt]
                        Table 2\\
                        \midrule
					ADM   & ImageNet 64x64 & 1.0 & 0.0 & 0  & 250 \\
					ADM-G  & ImageNet 64x64 &  1.0 & 4.0 & 250 &250 \\
                        ADM-CompG & ImageNet 64x64 &  1.0 & 4.0 & 50  & 250     \\
                        ADM   & ImageNet 256x256 & 1.0 & 0.0 & 0  & 250 \\
					ADM-G  & ImageNet 256x256 &  1.0 & 4.0 & 250 &250 \\
                        ADM-CompG & ImageNet 256x256 &  1.0 & 4.0 & 50  & 250     \\
                        
                        \midrule[2pt]
                        Table 3\\
                        \midrule
                        CADM   & ImageNet 64x64 & 1.0 & 0.0 & 0  & 250 \\
					CADM-G  & ImageNet 64x64 &  1.0 & 0.5 & 250 &250 \\
                        CADM-CompG & ImageNet 64x64 &  1.0 & 2.0 & 50  & 250     \\
                        CADM-CFG & ImageNet 64x64 &  1.0 & 0.1 & 250  & 250     \\
                        CADM-CompCFG & ImageNet 64x64 & 1.0 & 0.1 & 25 & 250 \\
                        CADM   & ImageNet 128x128 & 0.9 & 0.0 & 0  & 250 \\
					CADM-G  & ImageNet 128x128 &  1.0 & 0.5 & 250 &250 \\
                        CADM-CFG & ImageNet 128x128 &  1.0 & 0.5 & 250  & 250     \\
                        
                        CADM   & ImageNet 256x256 & 1.0 & 0.0 & 0  & 250 \\
					CADM-G  & ImageNet 256x256 &  1.0 & 0.5 & 250 &250 \\
                        CADM-CompG & ImageNet 256x256 &  1.0 & 0.5 & 50  & 250     \\
                        DiT-CFG & ImageNet 256x256 &  1.0 & 1.5 & 250  & 250     \\
                        DiT-CompCFG & ImageNet 256x256 & 1.0 & 1.5 & 22 & 250 \\
                        \midrule[2pt]

                        Table 4\\
                        \midrule
                        GLIDE-G   & MSCoco 64x64 & 1.0 & 0.0 & 250  & 250 \\
					GLIDE-CompG  & MSCoco 64x64 &  1.0 & 8.0 & 25 &250 \\
                        GLIDE-G   & MSCoco 256x256 & 1.0 & 0.0 & 250  & 250 \\
					GLIDE-CompG  & MSCoco 256x256 &  1.0 & 5.5 & 35 &250 \\
                       
                        \midrule[2pt]
                        Table 4\\
                        \midrule
                        SDiff-CFG  & MSCoco 64x64 & 1.0 & 2.0 & 250  & 250 \\
					SDiff-CompCFG  & MSCoco 64x64 &  1.0 & 8.0 & 8 &250 \\
                        
                        \bottomrule
				\end{tabular}
			\end{sc}
		\end{small}
	\end{center}
	\vskip -0.1in
\end{table}
\section{Additional qualitative results}\label{app:qual}
\begin{figure}[!ht]
    \centering
    \begin{subfigure}[t]{0.31\textwidth}
        \centering
        \includegraphics[width=\textwidth]{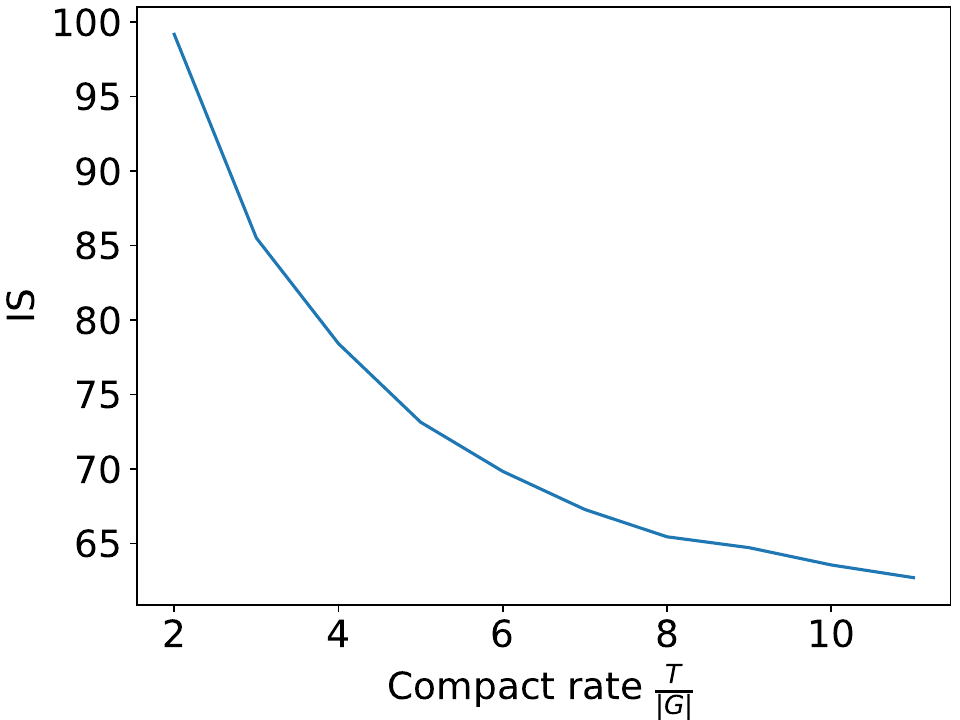}
    \caption{} \label{fig:comptis}
    \end{subfigure}
    ~
    \begin{subfigure}[t]{0.31\textwidth}
        \centering
        \includegraphics[width=\textwidth]{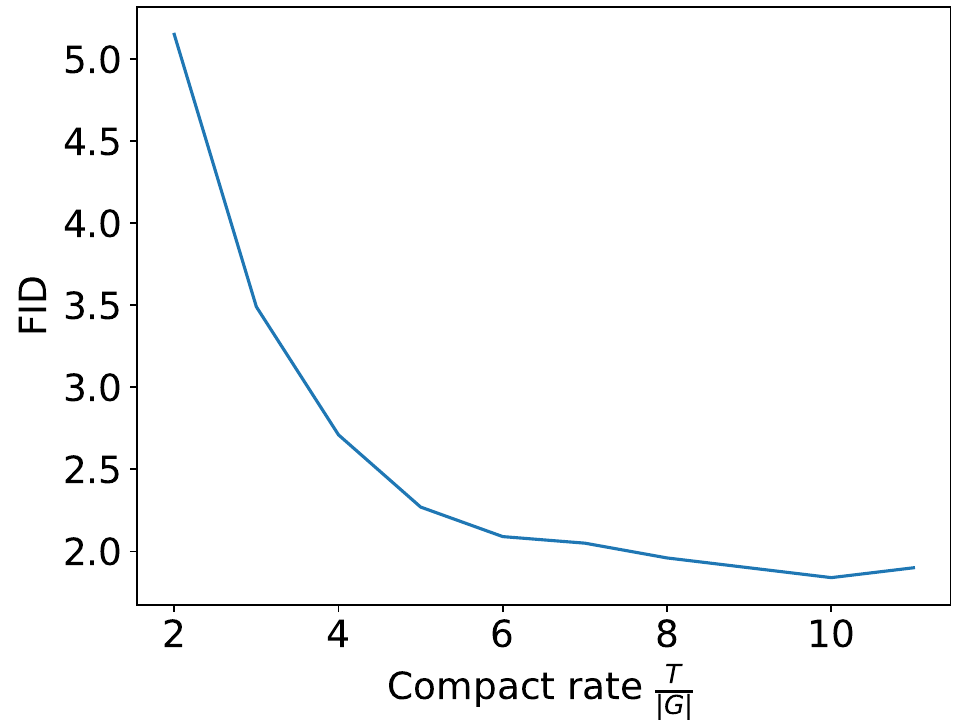}
    \caption{} \label{fig:comptfid}
    \end{subfigure}
    ~
    \begin{subfigure}[t]{0.31\textwidth}
        \centering
        \includegraphics[width=\textwidth]{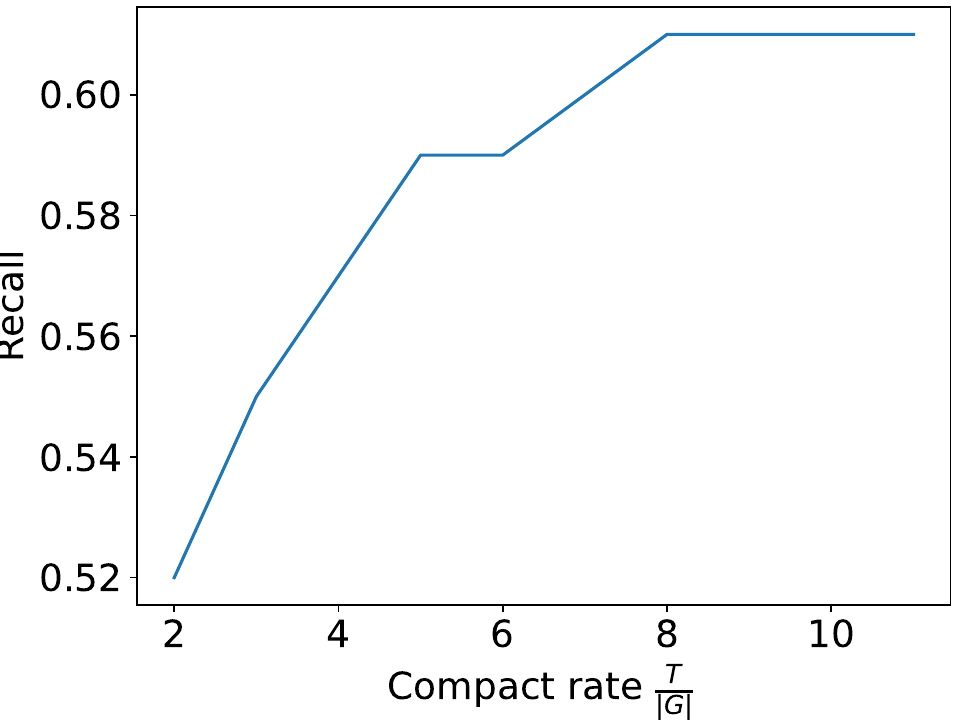}
    \caption{} \label{fig:comptrec}
    \end{subfigure}
    \caption{Trade-off: Running time versus performance. We measure the compact rate as $\frac{T}{|G|}$. In (a), IS decreases with increasing compact rate, while FID and Recall improve. However, when the rate exceeds 10, FID begins to rise. This suggests that increased diversity from more features initially enhances Recall and FID, but excessive diversity degrades image quality.}\label{fig:compact}
\end{figure}
\begin{figure}
    \centering
    \includegraphics[width=\textwidth]{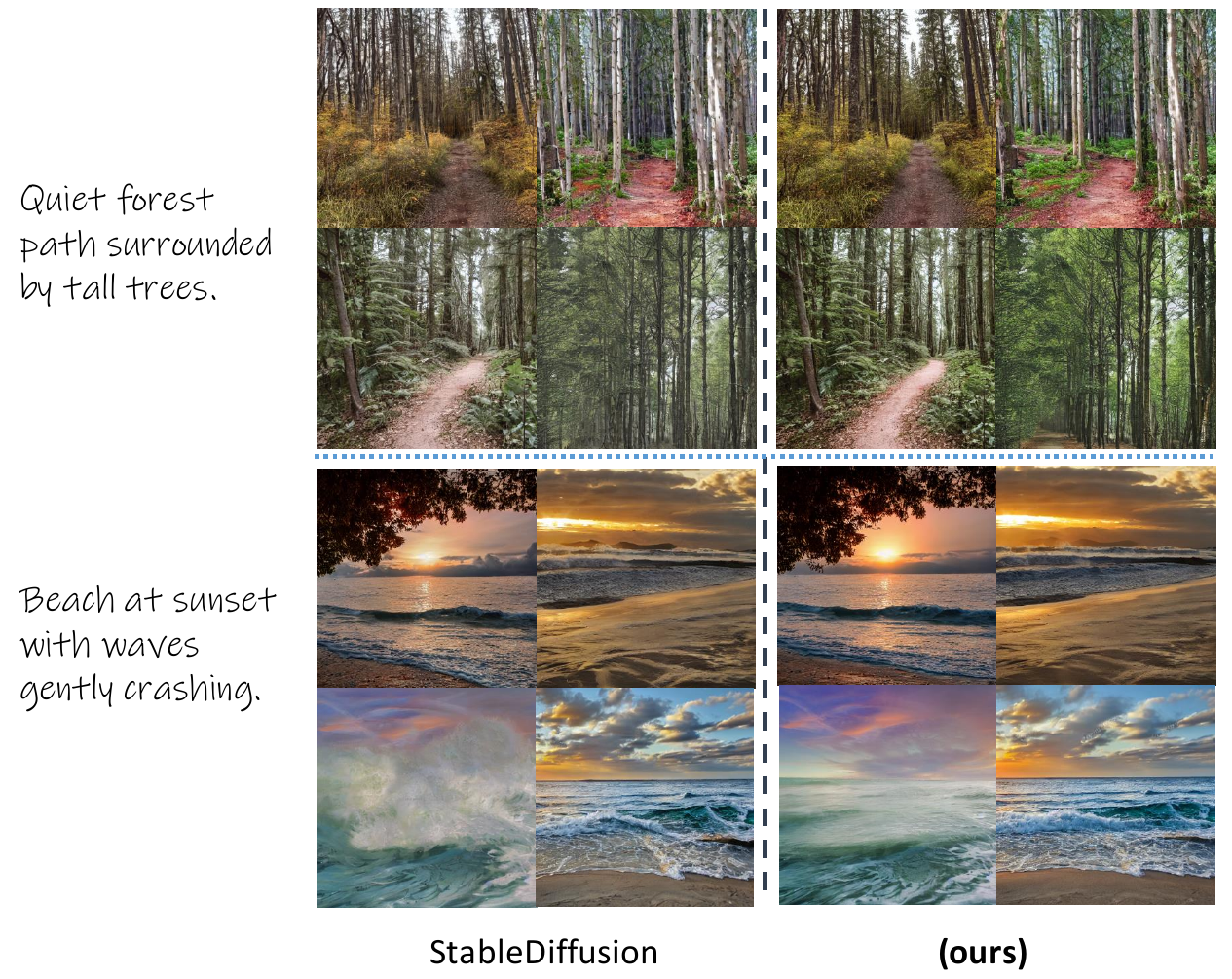}
    \caption{\textit{Stable Diffusion with classifier-free guidance. The left figure is the vanilla classifier-free guidance with application on all 50 timesteps. Our proposed Compress Guidance method is the right figure, where we only apply guidance on 10 over 50 steps. The output shows our methods' superiority over classifier-free guidance regarding image quality, quantitative performance and efficiency.}}
    \label{fig:app1}
    \vskip -3cm
\end{figure}
\begin{figure}
    \centering
    \includegraphics[width=\textwidth]{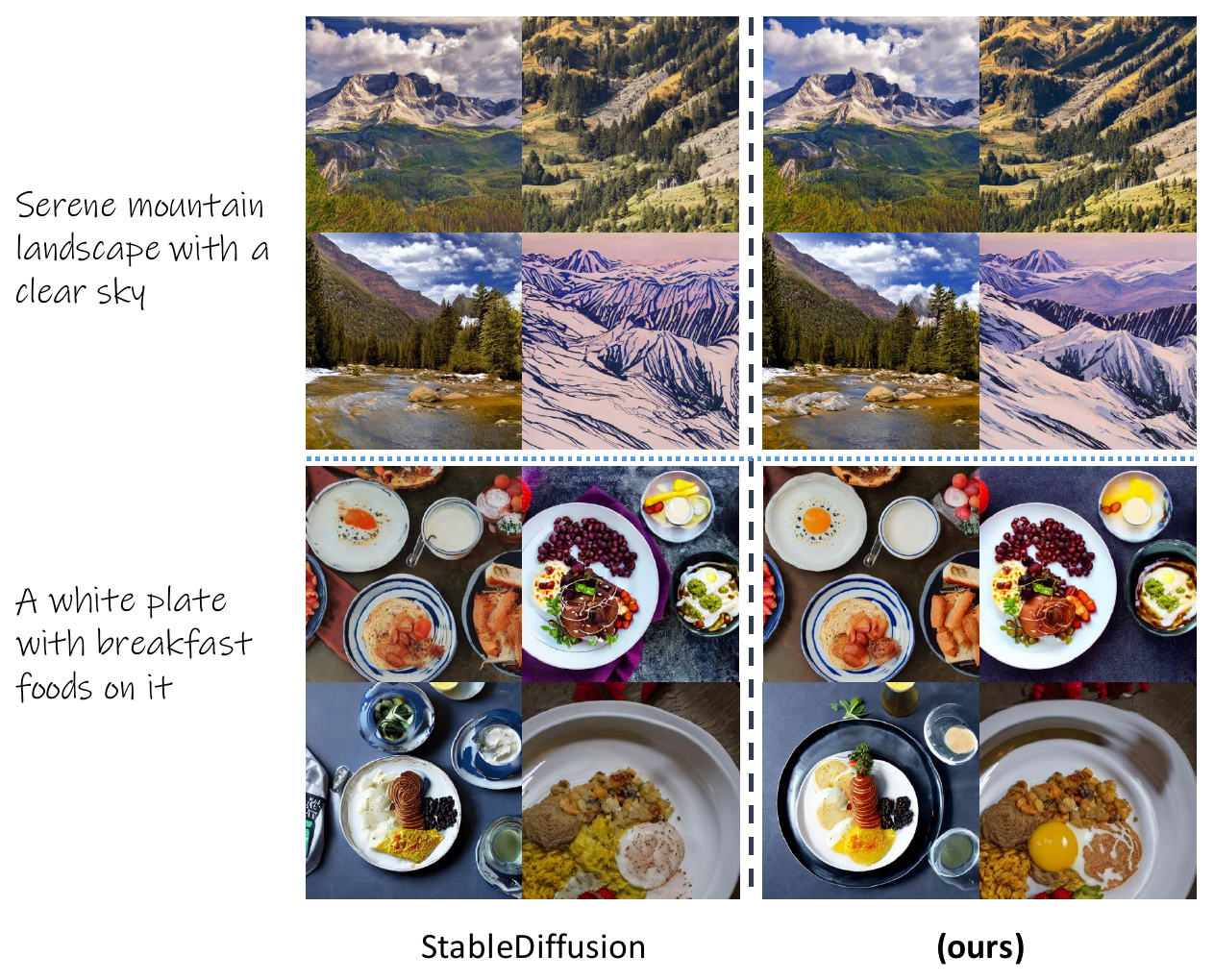}
    \caption{\textit{Stable Diffusion with classifier-free guidance. The left figure is the vanilla classifier-free guidance with application on all 50 timesteps. Our proposed Compress Guidance method is the right figure, where we only apply guidance on 10 over 50 steps. The output shows our methods' superiority over classifier-free guidance regarding image quality, quantitative performance and efficiency. }}
    \label{fig:app2}
    \vskip -0.5cm
\end{figure}
\begin{figure}
    \centering
    \includegraphics[width=\textwidth]{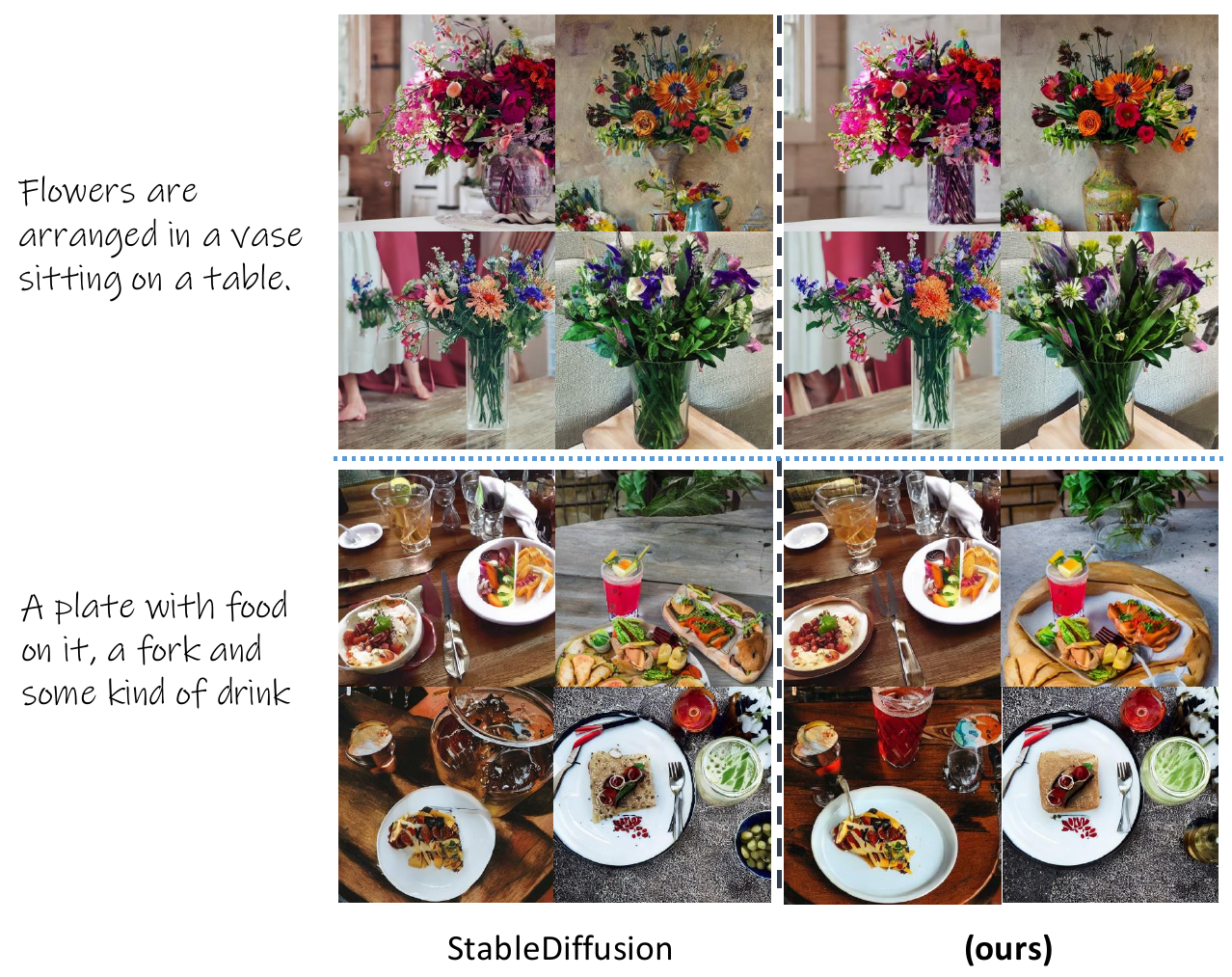}
    \caption{\textit{Stable Diffusion with classifier-free guidance. The left figure is the vanilla classifier-free guidance with application on all 50 timesteps. Our proposed Compress Guidance method is the right figure, where we only apply guidance on 10 over 50 steps. The output shows our methods' superiority over classifier-free guidance regarding image quality, quantitative performance and efficiency.}}
    \label{fig:app3}
    \vskip -0.5cm
\end{figure}
\begin{figure}
    \centering
    \includegraphics[width=\textwidth]{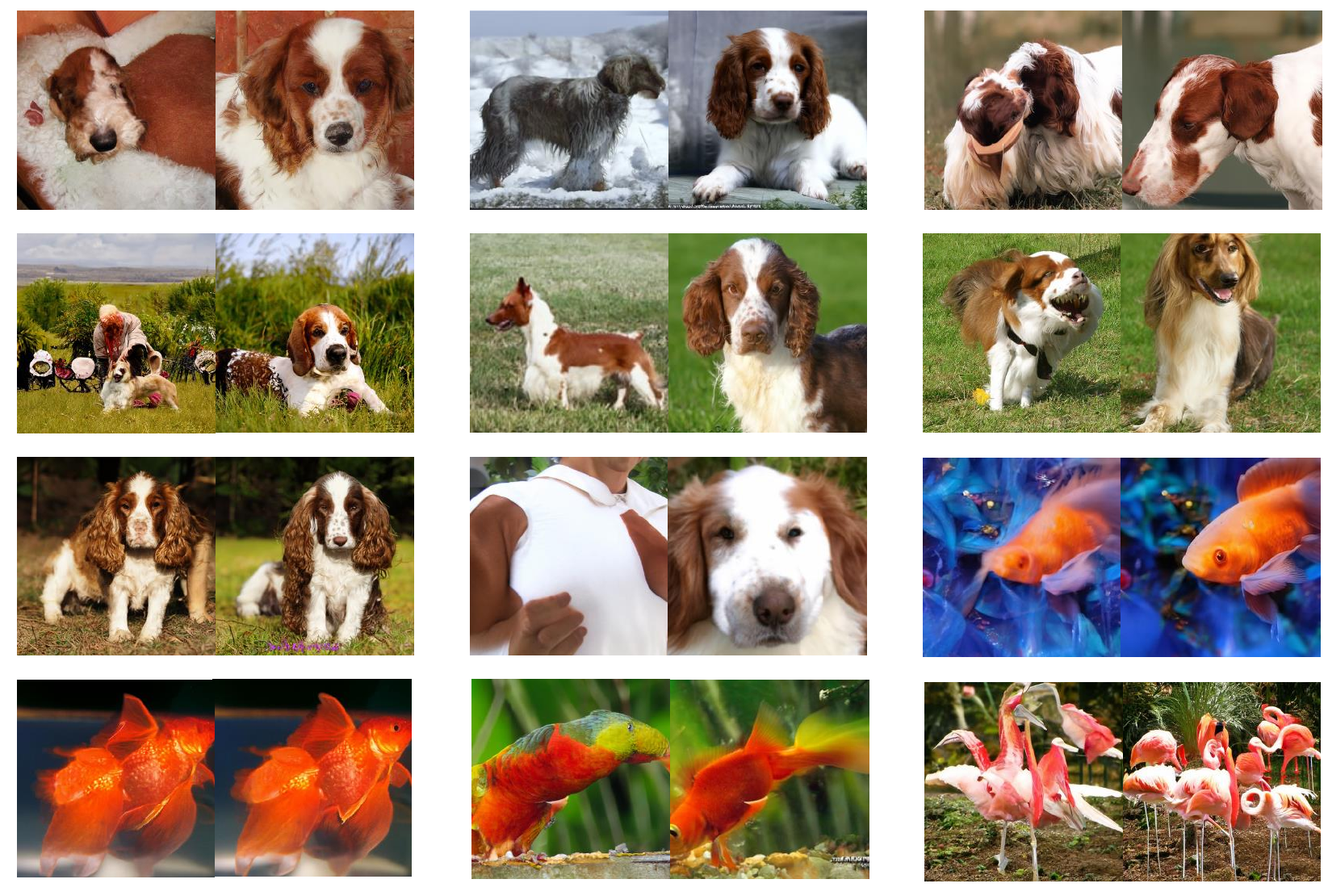}
    \caption{\textit{Qualitiative comparison between ADM-G and ADM-CompG.The images generated by ADM-G and ADM-CompG are put side by side. On the left side is ADM-G and on the right side is ADM-CompG.}}
    \label{fig:app4}
    \vskip -0.5cm
\end{figure}
\begin{figure}
    \centering
    \includegraphics[width=\textwidth]{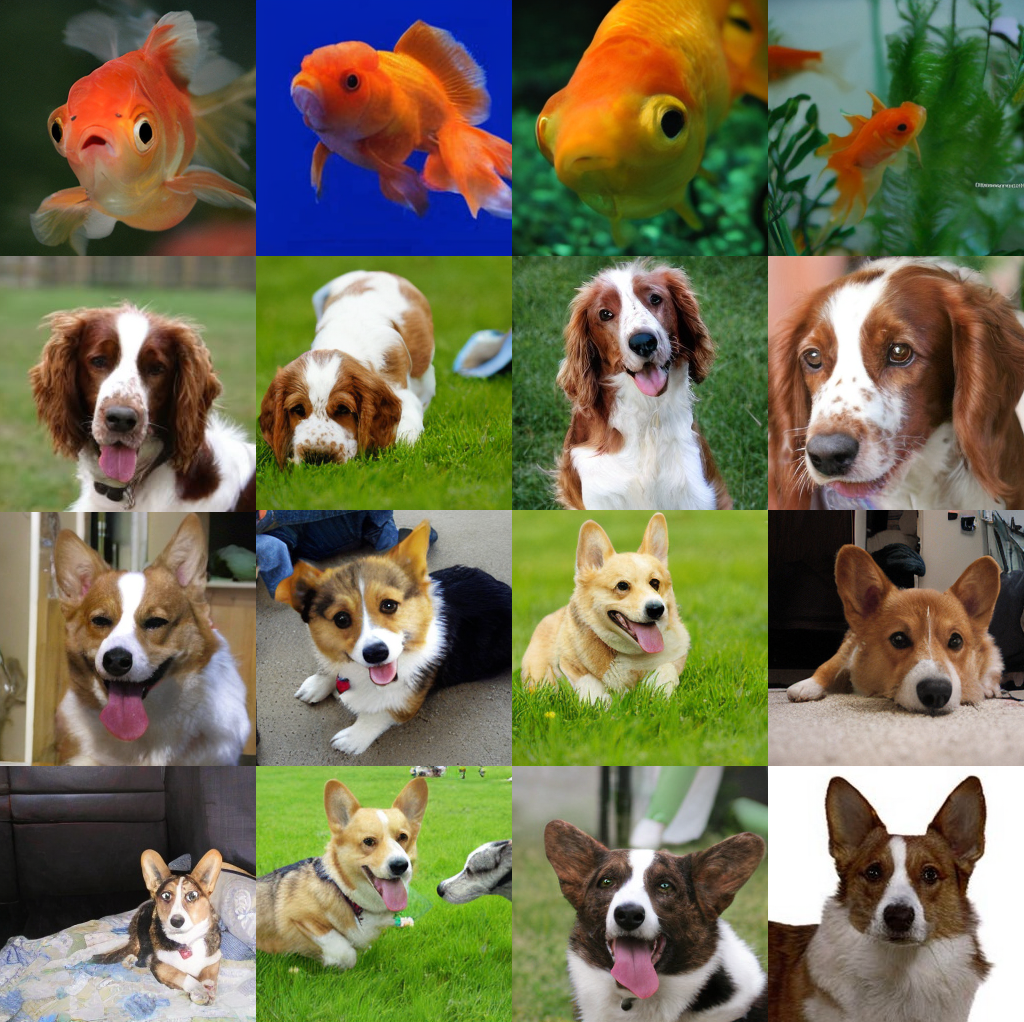}
    \caption{\textit{Images generated by DiT-CompCFG. From top to bottom classes goldfish, Welsh springer spaniel, Pembroke Welsh corgi, Cardigan Welsh corgi.}}
    \label{fig:app5}
    \vskip -0.5cm
\end{figure}
\begin{figure}
    \centering
    \includegraphics[width=\textwidth]{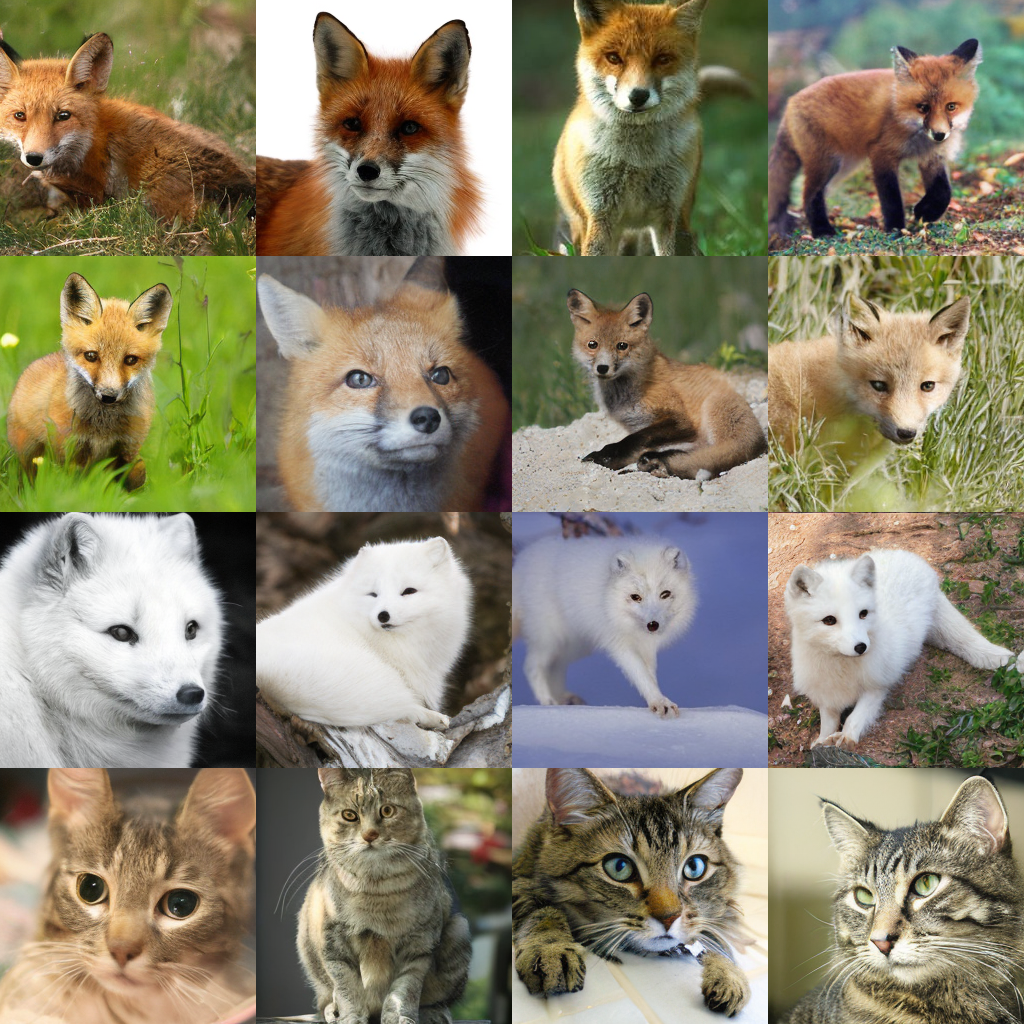}
    \caption{\textit{Images generated by DiT-CompCFG. From top to bottom classes redfox, kitfox, Arctic fox, tabby cat.}}
    \label{fig:app6}
    \vskip -0.5cm
\end{figure}

\clearpage



\newpage

\end{document}